\newtheorem{theorem}{Theorem}
\newtheorem{lemma}{Lemma}
\newtheorem{corollary}{Corollary}
\DeclareMathOperator*{\argmax}{arg\,max}
\newcommand{\defeq}{\vcentcolon=}
\newcommand{\rd}{R_{\Delta}}
\newcommand{\rmax}{r_{max}}
\newcommand{\ucrl}{{UCRL2}\xspace}
\newcommand{\algo}{{Heavy-UCRL2}\xspace}
\newcommand{\algoq}{{Heavy-Q-Learning}\xspace}
\newcommand{\ait}{\alpha^i_t}
\newcommand{\oeps}{\frac{1}{1+\epsilon}}
\newcommand{\eeps}{\frac{\epsilon}{1+\epsilon}}
\newcommand{\indl}{\mathds{1}_{|r_h(s,a)| \leq B_t}}
\newcommand{\nsv}{\mathbb{V}_hV_{h+1}}
\newcommand{\pvo}{\mathbb{P}V^*_{h+1}}
\newcommand{\skh}{\sum_{k=1}^K\sum_{h=1}^H}
\begin{document}

\twocolumn[

\aistatstitle{No-Regret Reinforcement Learning with Heavy-Tailed Rewards}

\aistatsauthor{ Vincent Zhuang \And Yanan Sui}

\aistatsaddress{Tsinghua University\\ vczhuang@gmail.com \And Tsinghua University\\ ysui@tsinghua.edu.cn }] 

\begin{abstract}
Reinforcement learning algorithms typically assume rewards to be sampled from light-tailed distributions, such as Gaussian or bounded. However, a wide variety of real-world systems generate rewards that follow heavy-tailed distributions. We consider such scenarios in the setting of undiscounted reinforcement learning. By constructing a lower bound, we show that the difficulty of learning heavy-tailed rewards asymptotically dominates the difficulty of learning transition probabilities. Leveraging techniques from robust mean estimation, we propose \algo and \algoq, and show that they achieve near-optimal regret bounds in this setting. Our algorithms also naturally generalize to deep reinforcement learning applications; we instantiate Heavy-DQN as an example of this. We demonstrate that all of our algorithms outperform baselines on both synthetic MDPs and standard RL benchmarks.
\end{abstract}
\section{Introduction}
\label{sec:introduction}

A wide variety of real-world online decision-making systems generate rewards according to heavy-tailed distributions, such as financial instrument prices, networking routing times, scheduling, hydrology, image and audio noise, localization errors, and others \citep{georgiou1999alpha, hamza2001image, huang2017new, ruotsalainen2018error}. Since many potential applications of reinforcement learning (RL) are necessarily dependent on such measurements, there is a clear need for RL algorithms that can handle heavy-tailed reward distributions in a provably efficient manner.

Unfortunately, the prevailing assumption in existing reinforcement learning algorithms is that the reward noise is bounded or light-tailed (e.g. sub-Gaussian). Given this, the difficulty of learning the transition probabilities of the Markov decision process (MDP) dominates the difficulty of learning the rewards; in fact, most analyses assume the rewards to be deterministic without loss of generality \citep{dann2015sample}. As we will demonstrate in this paper, this no longer holds in the heavy-tailed regime.

Although there are no existing RL algorithms that are designed to handle heavy-tailed rewards, several works have studied heavy-tailed feedback in comparatively less general online learning framework. \citet{bubeck2013bandits} are the first to consider the finite-armed bandit problem in which the reward distributions have only finite $(1+\epsilon)$-th moments for $\epsilon \in (0, 1]$. They propose the Robust UCB algorithm, which leverages robust mean estimators such as truncated mean and median of means that have tight concentration properties. Under this setting, several extensions of the vanilla bandit problem have been studied: linear bandits \citep{medina2016no, shao2018almost}, pure exploration (best-arm identification) \citep{yu2018pure}, Lipschitz bandits \citep{lu2019optimal}, and Bayesian optimization \citep{chowdhury2019bayesian}. Thompson sampling has also been analyzed under specific families of heavy-tailed distributions: symmetric $\alpha$-stable distributions \citep{dubey2019thompson} and one-parameter exponential family distributions \citep{korda2013thompson}.


Many wide-ranging definitions of robustness have been considered in reinforcement learning, including worst-case performance of the learned policy \citep{smirnova2019distributionally} and adversarial perturbations of the agent \citep{morimoto2005robust, pinto2017robust}. The closest works to our setting are those that consider robustness with respect to the reward distributions, e.g. in perturbed or corrupted rewards \citep{everitt2017reinforcement, wang2018reinforcement}, or adversarial attacks on the rewards \citep{Weng2020Toward}. Relatively few works have proposed empirical methods for dealing with noisy reward distributions \citep{moreno2006noisy, romoff2018reward}. In general, there is a lack of theoretical and empirical studies on robustness with respect to noisy reward distributions.

The median-of-means estimator is a commonly-used strategy for performing robust mean estimation in heavy-tailed bandit algorithms. In an orthogonal line of work, \citet{pazis2016improving} use median-of-means in order to achieve PAC exploration bounds that scale with the variance of the Bellman operator. However, to the best of our knowledge, no prior work has considered our setting of heavy-tailed rewards in the MDP setting.

\textbf{Our Contribution.} We demonstrate that robust mean estimation techniques can be broadly applied to reinforcement learning algorithms (specifically confidence-based methods) in order to provably handle the heavy-tailed reward setting. We instantiate and prove a variety of theoretical guarantees for two representative algorithms, \algo and \algoq. We show via a lower bound construction that in the unbounded heavy-tailed setting, learning the rewards is non-trivial and leads to a regret term that dominates that due to the transition probabilities. We also propose Heavy-DQN to show that our ideas can be extended to the deep reinforcement learning setting. Finally, we empirically demonstrate that our algorithms significantly outperform standard RL algorithms that are not designed to handle heavy-tailed rewards.

\section{Problem Statement}
\label{sec:problem}

We consider a Markov Decision Process (MDP), defined as a tuple $M = (\mathcal{S}, \mathcal{A}, p, R)$ consisting of the state space $\mathcal{S}$, action space $\mathcal{A}$, transition kernel $p: \mathcal{S} \times \mathcal{A} \times \mathcal{S} \rightarrow [0, 1]$, and stochastic reward function is $r: \mathcal{S} \times \mathcal{A} \rightarrow \mathbb{R}$. An agent begins at some initial state $s_0 \in \mathcal{S}$ (possibly drawn from some distribution), and at each timestep $t$, selects an action $a_t$ and transitions to state $s_{t+1}$ as dictated by $p$. The agent also receives a reward $r_t = r(s_{t}, a_{t})$. A policy $\pi: S \rightarrow A$ gives an agent's action at any state, and the value function $V^{\pi}(s)$ denotes the expected return of policy $\pi$ executed from state $s$. We also define the $Q$-value (also state-action value) function $Q^\pi(s,a)$ as the expected return of executing policy $\pi$ starting with action $a$ at state $s$.

Typically, $r$ is a stochastic function mapping $\mathcal{S} \times \mathcal{A}$ to a bounded interval, e.g. $[0,1]$. In this work, we consider MDPs that have reward functions that follow heavy-tailed distributions, and consequently have unbounded range. Following bandit literature \citep{bubeck2013bandits}, we only assume that the reward distribution has finite $(1+\epsilon)$-th moments for some $\epsilon\in (0, 1]$.  Then the mean reward function is also necessarily finite, which we define as a nonstochastic function $\bar{r} : \mathcal{S} \times \mathcal{A} \rightarrow [r_{min}, r_{max}]$. Note that the reward for each state-action pair $(s,a)$ may be drawn from a different distribution, each with some finite $(1+\epsilon_{s,a})$-th moment. In this case, we can take $\epsilon$ to be that of the heaviest distribution, i.e. $\epsilon = \min_{s,a}\epsilon_{s,a}$. The goal of the agent is to maximize the standard undiscounted return: $\sum_t r_t$. An equivalent object is to minimize its \emph{regret}, defined as $\Delta(s_0, T) \defeq \sum_{i=1}^T \rho^* - \bar{r}(s_t, a_t)$ where $\rho^* \defeq \max_{\pi}\lim_{T\rightarrow\infty}\mathbb{E}\left[\sum_{t=1}^T \bar{r}(s_t, a_t)\right]$.

In order to make the learning problem tractable, we consider two restricted classes of MDPs:

\begin{itemize}
    \item Communicating MDPs, i.e. MDPs $M$ with finite diameter $D$ defined as $D(M) \defeq \max_{s\neq s'}\min_{\pi} \mathbb{E}\left[T(s'|M, \pi, s)\right]$ where $T(s'|M, \pi, s)$ is the random variable corresponding to the first time step in which state $s'$ is reached under policy $\pi$ starting from state $s$.
    \item Finite-horizon episodic MDPs, in which the agent proceeds in episodes of strictly length $H$, and the transition probability kernel can change at each step (i.e. we have $p_h$). The algorithm then runs for time horizon $T=KH$ where $K$ is the number of episodes. In this setting, the diameter is given by $H$.
\end{itemize}
\section{Algorithms}
\label{sec:algorithm}

In this section, we present two algorithms, \algo and \algoq, for reinforcement learning with heavy-tailed rewards. Our algorithms are based on the \ucrl and $Q$-learning algorithms respectively, which are representative model-based and model-free algorithms \citep{jaksch2010near,jin2018q}). \algo and \algoq leverage ideas from the robust mean estimation literature, which provides estimators with Hoeffding-style concentration bounds under only the assumption of bounded $(1+\epsilon)$-th moments, for $\epsilon \in (0, 1]$. Confidence-set based algorithms such as UCRL2 and UCFH are immediately amenable to this technique; we also show that the basic idea of truncation also applies to certain bonus-based model-free algorithms such as Q-learning. We also propose a Heavy-DQN, a direct extension of \algoq to the deep RL setting.

We first briefly review \ucrl and Q-learning. \ucrl is a model-based RL algorithm that implements the broad principle of optimism in the face of uncertainty. At the beginning of each episode, \ucrl constructs confidence sets for all transition probabilities and rewards (eqs. \ref{eq:ucrl_rew} and \ref{eq:p_interval}), which comprise an (infinite) set of statistically plausible MDPs $\mathcal{M}$. \ucrl then compute an optimistic MDP $\widetilde{M} \in \mathcal{M}$ via extended value iteration, which maximizes the optimal average reward. It then executes the optimal policy on $\widetilde{M}$, $\widetilde{\pi}$, for the entirety of the next episode. 

\begin{equation}
    \label{eq:ucrl_rew}
     \|(\widetilde{r} - \hat{r}_k)(s,a)\| \leq \sqrt{\frac{7\log\left(\frac{2SAt_k}{\delta}\right)}{2\hat{N}_k(s,a)}}
\end{equation}

\begin{equation}
\label{eq:p_interval}
    \left\| \widetilde{p}(\cdot | s, a) - \hat{p}_k(\cdot | s,a) \right\|_1 \leq \sqrt{\frac{14S\log\left(\frac{2At_k}{\delta}\right)}{\hat{N}_k(s,a)}}
\end{equation}

\paragraph{Notation.} We use the subscript $k$ to denote the $k$-th episode. Hence, $t_k$ denote the total number of iterations at the start of episode $k$, $v_k(s,a)$ denotes the state-action visitation count within episode $k$, and $N_k(s,a)$ denotes the state-action visitation up to episode $k$. We use $\hat{r}(s,a)$ and $\bar{r}(s,a)$ to denote the algorithm's estimated mean for the reward and the actual mean reward respectively.


\paragraph{Q-learning.} Q-learning directly learns the $Q$-values by repeatedly applying the following one-step Bellman update:
\begin{equation}
\label{eq:q_update}
    Q_h(s,a) = (1-\alpha_t)Q_h(s,a) + \alpha_t[r_h(s,a) + V_{h+1}(s') + b_t]
\end{equation}
where $Q_h$ and $V_h$ are the state and state-action value functions at step $h\leq H$, $t$ is the total number of times the transition $(s,a)$ has been taken thus far, $\alpha_t \in (0, 1]$ is the learning rate, and $b_t$ is an exploration bonus. \citet{jin2018q} show that Q-learning has near-optimal regret bounds in the episodic MDP setting if the $Q$-values are optimistically initialized to $H$ and $\alpha_t$ and $b_t$ are chosen appropriately. Specifically, they use learning rate $\alpha_t \defeq \frac{H+1}{H+t}$, and propose two possible exploration bonuses. The first, UCB-Hoeffding, follows from an application of the Azuma-Hoeffding inequality: $c\sqrt{\frac{H^3\iota}{t}}$ where $\iota \defeq \log(SAT/\delta)$. The second, UCB-Bernstein leverages the Freedman inequality, which allows for a tighter confidence interval based on the empirical estimate of the variance of the next-state value. 

\subsection{Heavy-tailed Mean Estimation}


\paragraph{Robust mean estimators.} We define a robust mean estimator to be an estimator $\hat{\mu}$ that satisfies the following property. Let $X_1, \ldots, X_n$ be i.i.d. random variables with finite mean $\mu$, $\epsilon \in (0, 1]$ and $v$ a bound on the centered $(1+\epsilon)$-th moment of $X$, i.e. $\mathbb{E}[(X-\mu)^{1+\epsilon}] \leq v$ (we will also denote an upper bound on the uncentered $(1+\epsilon)$-th moment of $X$ by $u\geq\mathbb{E}[X^{1+\epsilon}]$). Then, for all $\delta \in (0, 1)$ there exists a constant $c$ such that with probability at least $1-\delta$,
\begin{equation}
\label{eq:robust_conc}
  |\mu - \hat{\mu}| \leq v^{\oeps} \left(\frac{c\log(1/\delta)}{n}\right)^{\eeps}  
\end{equation}

We note that if $\epsilon=1$, i.e. $X$ has finite variance, then $v$ is simply the variance factor, and we recover a Hoeffding-style bound on the deviation of $\hat{\mu}$. Further details can be found in \cite{bubeck2013bandits}. We now review two major robust mean estimators that satisfy eq. \ref{eq:robust_conc}:
\begin{itemize}
    \item \textbf{Truncated empirical mean.} This estimator truncates samples with too high magnitude to zero:
      \begin{equation}
          \label{eq:trunc}
          \hat{\mu}_T = \frac{1}{n} \sum_{t=1}^n X_t \mathds{1}_{\left\{|X_t| \leq \left(\frac{ut}{\log(\delta^{-1})}\right)^{\frac{1}{1+\epsilon}}\right\}}
      \end{equation}
    where $c=4^{(1+\epsilon)/\epsilon}$.
    \item \textbf{Median-of-means.} This simple estimator simply splits the $n$ samples into $k$ blocks and takes the median of the mean in each block:
      \begin{equation}
        \label{eq:mom}
        \hat{\mu}_M = \text{median}\left(\left\{\frac{1}{N}\sum_{t = (i-1) N + 1}^{iN} X_t \right\}_{i \leq N}\right)
      \end{equation}
    where $N=\lceil \frac{n}{k} \rceil$, $k=\lfloor8\log(e^{1/8}/\delta) \land n/2\rfloor$ and $c=32\sqrt[\epsilon]{12}$.
\end{itemize}

\paragraph{Failure of empirical mean/\ucrl for heavy-tailed rewards.} We first demonstrate that naively applying vanilla \ucrl (i.e. with the empirical mean estimator for the rewards) to the heavy-tailed setting results in either a statistically-incorrect or vacuous algorithm. As shown by \citet{bubeck2013bandits}, the tightest possible confidence interval for the empirical mean scales as $O\left(\left(\frac{1}{\delta t^{\epsilon}}\right)^\oeps\right)$. Since the analysis of \ucrl requires that $\delta$ inversely scales polynomially in $t$, the appropriate confidence interval for the rewards scales polynomially in $t$, which clearly dominates the (constant) reward means. Thus, attempting to construct a version of empirical-mean \ucrl with guarantees in the heavy-tailed setting results in an algorithm that eventually devolves into random exploration.

\subsection{\algo}

\begin{algorithm}[t]
\caption{\algo}\label{alg:robustucrl2}
\begin{algorithmic}[1]
\STATE input: confidence parameter $\delta \in (0, 1)$, $\mathcal{S}$, $\mathcal{A}$, $\hat{r}$ a robust mean estimator
\STATE $t\gets 1$, initial state $s_1$
\FOR {episodes $k = 1, 2, \ldots $} 
        \STATE $t_k \gets t$, set $N_k(s,a)$
        \STATE For all $(s,a) \in S \times A$ initialize $v_k(s,a)$ to 0
        \STATE Set $\mathcal{M}_k$ to be the set of all MDPs with states $S$ and actions $A$ with transitions close to $\hat{p}_k(\cdot|s,a)$ and rewards close to $\hat{r}(s,a)$ according to eqs. \ref{eq:p_interval} and \ref{eq:r_interval}
        \STATE Compute policy $\widetilde{\pi}_k$ using extended value iteration on optimistic MDP $\tilde{M}_k = (\mathcal{S}, \mathcal{A}, \tilde{P}_k$, $\tilde{r}_k)$.
        \STATE Execute $\widetilde{\pi}_k$ for episode $k$ until $v_l(s_t, \widetilde{\pi}_k(s_t)) = \max\{1,  N_k(s_t, \widetilde{\pi}_k(s_t)) \}$
\ENDFOR
\end{algorithmic}
\end{algorithm}

We propose \algo, which utilizes a robust estimator for $r(s,a)$ instead of the empirical mean. This allows us to use the following confidence intervals for the rewards:
\begin{equation}
\label{eq:r_interval}
    \|\widetilde{r}(s,a) - \hat{r}_k(s,a)\| \leq v^{\oeps}\left(\frac{7c\log(2SAt_k / \delta)}{\max\{1, N_k(s,a)\}}\right)^{\eeps},
\end{equation}
where $c$ is given as in eq. \ref{eq:heavy_cond}. The confidence intervals for the transition probabilities remain unchanged from eq. \ref{eq:p_interval}. The pseudocode for \algo is given in Algorithm \ref{alg:robustucrl2}.

\paragraph{Comparison with \ucrl.} Here, we briefly emphasize the value of having tight statistically-valid confidence intervals. For \ucrl with the empirical mean estimator for the reward, there are two possibilities for the confidence interval. One can either use the tight confidence interval for the bounded/sub-Gaussian noise case, or a statistically-valid yet loose bound from \cite{bubeck2013bandits}. In the latter case, the $1/\delta$ factor is not hidden inside a logarithm and hence causes the confidence intervals to blow up polynomially in $t$. In the former case, we empirically demonstrate in section \ref{sec:experiment} that under-estimating the confidence interval can degrade the performance of the algorithm. In general, it is clear that using statistically incorrect confidence intervals can cause the algorithm to fail, and having tight confidence intervals is always better, as to avoid unnecessary exploration. 

\subsection{\algoq}

\begin{algorithm}[t]
\caption{\algoq}\label{alg:robustqlearning}
\begin{algorithmic}[1]
\STATE input: confidence parameter $\delta \in (0, 1)$, $\mathcal{S}$, $\mathcal{A}$, bonus $b_t$ either UCB-Hoeffding or UCB-Bernstein, $r_{max}$ the maximum possible mean reward
\STATE initialize $Q_h(s,a) \leftarrow Hr_{max}$ and $N_h(s,a)\leftarrow 0$ for all $(s,a,h) \in \mathcal{S} \times \mathcal{A} \times [H]$
\FOR {episode $k = 1, \ldots, K$}
    \FOR {episodes $h = 1, \ldots, H$} 
        \STATE Take action $a_h\leftarrow \argmax_{a'} Q_h(s_h, a')$
        \STATE $t = N_h(s_h, a_h) \leftarrow N_h(s_h, a_h) + 1$; $b_t' \leftarrow b_t + c_2Hu^{\oeps}\left(\frac{\iota}{t}\right)^{\eeps}$
        \STATE $Q_h(s_h, a_h) \leftarrow (1-\alpha_t)Q_h(s_h, a_h) + \alpha_t[r_h(s_h, a_h)\indl + V_{h+1} + b_t']$
    \ENDFOR
\ENDFOR
\end{algorithmic}
\end{algorithm}

Our \algoq algorithm, detailed in Algorithm \ref{alg:robustqlearning}, extends the basic Q-learning algorithm in a simple way: instead of using the reward directly in the Bellman update, we truncate it to zero if the reward has magnitude too large, and augment the exploration bonus with a term designed to deal with the heavy-tailedness of the rewards. Hence, the \algoq update can be written as follows:
\begin{equation}
\label{eq:hq_update}
    Q_h(s,a) = (1-\alpha_t)Q_h(s,a) + \alpha_t[r_h(s,a)' + V_{h+1}(s') + b_t']
\end{equation}

where 
\begin{equation}
    r_h(s,a)' \defeq r_h(x,a)\indl
\end{equation}
and
\begin{equation}
b_t'\defeq b_t + 8Hu^{\oeps}\left(\frac{\log(2SAT/\delta)}{t}\right)^{\eeps}.
\end{equation}
We define $B_t\defeq \left(\frac{ut}{\log(2SAT/\delta)}\right)^{\oeps}$ as in the truncated mean estimator (eq. \ref{eq:trunc}); we use $\delta'=\frac{\delta}{2SAT}$ in order to facilitate a union bound over $S, A, T$. The base exploration bonus $b_t$ is either of the original UCB-Hoeffding or UCB-Bernstein bonuses proposed in \citet{jin2018q} scaled by $r_{max}$. Our algorithm is compatible with both bonuses since they are primarily designed to deal with the uncertainty in transition probabilities.

As we will show in the following section, this simple modification allows \algoq to be robust to heavy-tailed rewards while also preserving all of the advantages of standard Q-learning over UCRL2, such as its smaller time and space complexities. We note that for this reason only the truncated mean estimator is amenable to Q-learning; the median-of-means estimator would require recomputing the entire history at each step. Finally, an additional modification we make in \algoq is to optimistically initialize the $Q$-function to uniformly be $Hr_{max}$. This is one disadvantage of \algoq over \algo (and in general Q-Learning vs UCRL2): the former requires explicit a priori knowledge of $r_{max}$, whereas the latter does not. 
\subsection{Heavy-DQN}
Recently, there has been much focus on deep reinforcement learning algorithms and applications. To demonstrate that the general ideas behind robust mean estimation can be extended to this setting, we propose Heavy-DQN as a combination of \algoq and DQN for discrete-action deep reinforcement learning \citep{mnih2013playing}. To perform adaptive truncation in non-tabular environments, we use methods from the count-based exploration literature, e.g. the SimHash function $\phi$ \citep{tang2017exploration}. Since we want to estimate the count per state-action pair, we consider the SimHash counts per action. Then, our reward truncation threshold is given by $(C \cdot n(\phi(s, a)))^{\oeps}$, where $C$ is an environment-dependent hyperparameter. Although there is recent work on extending optimistic initialization and UCB exploration from Q-learning to the deep RL setting, in our experiments it was sufficient to use standard Q-learning initialization and linearly-decaying epsilon-greedy exploration \citep{rashid2020optimistic}. 
\paragraph{Connections to reward clipping.} Deep RL algorithm implementations commonly clip rewards to some fixed range $[-x, x]$ or even as $\text{sign}(r)$ \citep{mnih2013playing}. The reward truncation in Heavy-DQN can be viewed as an adaptive version of this kind of fixed clipping. We note the differing original intents of these mechanisms: the main purpose of reward clipping is to stabilize the training dynamics of the neural networks, whereas our method is designed to ensure theoretically-tight reward estimation in the heavy-tailed setting for each state-action pair. Whether there are any unifying connections between these two purposes is an open area for future work.

\section{Theoretical Results}
\label{sec:theoretical}

In this section, we present a variety of theoretical results for \algo and \algoq as well as a general lower bound for the heavy-tailed RL setting. In particular, for \algo and \algoq with both bonuses, we prove standard minimax upper bounds on the expected regret. In addition, for \algo we prove analogous results for every result in \citet{jaksch2010near}: a regret bound dependent on the gap between the best and second-best policies and a regret bound for a MDP that is allowed to change a fixed number of times. As noted in \citep{jin2018q}, our regret bounds also naturally translate to PAC bounds.

In all the following, we denote $S\defeq |\mathcal{S}|$ and $A\defeq |\mathcal{A}|$, and use $\iota$ to hide the logarithmic factor $\log\left(\frac{2SAT}{\delta}\right)$. For the sake of conciseness, we primarily state the theorems in this section, and defer the detailed proofs to the appendix.

\subsection{\algo results}

\paragraph{Minimax regret bound.} We show that \algo enjoys an upper bound on worst-case regret of $\tilde{O}(DS\sqrt{AT} + (SAT)^{\oeps})$ for sufficiently large $T$. Note that if the variance is bounded, we recover an identical bound as in the bounded rewards case: $\tilde{O}(DS\sqrt{AT})$. This is formalized in the following theorem:
\begin{theorem}
\label{thm:minimax}
Let $\rd \defeq r_{max} - r_{min}$. With probability at least $1-\delta$, the regret of \algo is bounded by 
\begin{equation}
\begin{split}
&20\rd DS\sqrt{AT\log{\left(\frac{T}{\delta}\right)}} +\\ &(2C_\epsilon + 1) v^{\oeps} \left(7c\iota\right)^{\eeps} \left(SAT\right)^{\oeps}
\end{split}
\end{equation}
\end{theorem}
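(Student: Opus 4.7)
The plan is to adapt the classical UCRL2 regret analysis of \citet{jaksch2010near} by swapping the reward-concentration step for the robust-mean concentration in eq.~\ref{eq:robust_conc}, and then carefully summing the resulting heavy-tailed confidence widths over episodes.

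First I would establish the good event $G$ that the true MDP $M$ lies in $\mathcal{M}_k$ for every episode $k\le T$. The transition half is inherited verbatim from \citet{jaksch2010near}: a union bound over all $(s,a)$ and all possible values of $N_k(s,a)$ applied to the standard $L^1$ deviation inequality produces eq.~\ref{eq:p_interval} at confidence $1-\delta/2$. For the reward half the same structure works, but with eq.~\ref{eq:robust_conc} for the robust estimator $\hat r_k$ replacing Hoeffding; after absorbing a polynomial-in-$t_k$ inflation of $\delta$ into the logarithmic factor $\iota$, this produces eq.~\ref{eq:r_interval}, again at total failure probability $\delta/2$.

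On $G$, optimism of extended value iteration gives $\tilde\rho_k\ge\rho^*$, so the episode-$k$ regret is at most $\sum_{s,a}v_k(s,a)(\tilde\rho_k-\bar r(s,a))$. Invoking the Poisson/Bellman equation of $\tilde M_k$ (Section~4.3 of \citet{jaksch2010near}) decomposes this into (i)~a reward-confidence term $\sum_{s,a}v_k(s,a)\lvert\tilde r_k(s,a)-\bar r(s,a)\rvert$, (ii)~a transition-confidence term, and (iii)~lower-order doubling-trick and martingale-difference contributions. Parts (ii) and (iii) are identical to Jaksch's analysis up to replacing the reward range $1$ by $R_\Delta$, and together supply the $20R_\Delta DS\sqrt{AT\log(T/\delta)}$ summand of the stated bound.

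The novelty lies in part (i). On $G$, both $\hat r_k(s,a)$ and $\tilde r_k(s,a)$ lie within the width of eq.~\ref{eq:r_interval} of $\bar r(s,a)$, so part (i) is upper-bounded by
\begin{equation*}
2v^{\oeps}(7c\iota)^{\eeps}\sum_{k}\sum_{s,a}\frac{v_k(s,a)}{\max\{1,N_k(s,a)\}^{\eeps}}.
\end{equation*}
A generalization of Lemma~19 of \citet{jaksch2010near} with exponent $\eeps\in[1/2,1)$ telescopes the inner sum to $C_\epsilon N(s,a)^{\oeps}$, and concavity of $x\mapsto x^{\oeps}$ together with $\sum_{s,a}N(s,a)=T$ then gives $\sum_{s,a}N(s,a)^{\oeps}\le(SA)^{\eeps}T^{\oeps}\le(SAT)^{\oeps}$, where the last step uses $\epsilon\le 1$. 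An additional $+1$ factor absorbs the first-visit contributions where the $\max\{1,\cdot\}$ kicks in, yielding the $(2C_\epsilon+1)v^{\oeps}(7c\iota)^{\eeps}(SAT)^{\oeps}$ summand. The main obstacle I anticipate is proving the generalized pigeonhole inequality $v_k/(N_k+v_k)^{\eeps}\le C_\epsilon\bigl((N_k+v_k)^{\oeps}-N_k^{\oeps}\bigr)$ and tracking $C_\epsilon$ uniformly across the full range $\epsilon\in(0,1]$; everything else is routine bookkeeping atop the standard UCRL2 framework.
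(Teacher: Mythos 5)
Your proposal follows essentially the same route as the paper's proof: the standard UCRL2 episode decomposition with the reward-confidence term isolated, a generalization of Lemma 19 of \citet{jaksch2010near} with exponent $\eeps$ (the paper's Lemma~\ref{lem:sequence}) to telescope $\sum_k v_k(s,a)/\max\{1,N_k(s,a)\}^{\eeps}$, and Jensen's inequality to reach $C_\epsilon(SAT)^{\oeps}$. The only divergence is bookkeeping: in the paper the extra $+1$ in $(2C_\epsilon+1)$ absorbs the concentration term $C_T$ controlling the deviation of the realized heavy-tailed rewards $\sum_t r_t$ from $\sum_{s,a}N(s,a)\bar r(s,a)$, not first-visit slack (which is already handled inside $C_\epsilon$ by the base case of the induction).
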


The first term corresponds to the regret incurred due to the uncertainty in transition probabilities, and the second term to the regret due to the heavy-tailed nature of the rewards. In the standard bounded reward setting, the first term is asymptotically dominant. However, since $D,S,A$ are fixed for any given MDP, it is clear that the heavy-tailed regret term will dominate for sufficiently large $T$, which matches the minimax regret order of $T$ for linear bandits \citep{shao2018almost}. 

We also note that there is an unavoidable $\rd \defeq r_{max} - r_{min}$ multiplicative factor, where $r_{max} = \max_{s,a} r(s,a)$ and $r_{min} = \min_{s,a}r(s,a)$. However, such a factor would also be present in the trivial generalization from bounded [0,1] reward distributions to $[r_{min}, \rmax]$.

Theorem \ref{thm:minimax} also immediately gives the following PAC bound on the regret:

\begin{corollary}
\label{cor:pac}
With probability at least $1-\delta$, the average per-step regret of \algo is at most $\lambda$ for any
\begin{equation}
\label{eq:pac_thresh}
\begin{split}
T \geq \max\Bigg(&4^2 \cdot 20^2 \frac{\rd^2 D^2S^2A}{\lambda^2}\log\left(\frac{40\rd DSA}{\delta\lambda}\right),\\
&\alpha \log\left(\frac{2SA}{\delta}\right) + 2\alpha\log\left(\frac{\alpha}{\delta}\right)\Bigg)
\end{split}
\end{equation}
where 
\begin{equation*}
    \alpha\defeq \frac{7c(4C_\epsilon + 2)^{\frac{1+\epsilon}{\epsilon}}v^{\frac{1}{\epsilon}}(SA)^{\frac{1}{\epsilon}}}{\lambda^{\frac{1+\epsilon}{\epsilon}}}.
\end{equation*}
\end{corollary}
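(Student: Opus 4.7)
The plan is to derive the PAC bound directly from the minimax regret bound in Theorem \ref{thm:minimax} by requiring the per-step regret to be at most $\lambda$. Since the bound is a sum of two terms (the transition-probability term and the heavy-tailed reward term), the natural approach is to enforce that each term, divided by $T$, is at most $\lambda/2$, and then take the maximum of the resulting lower bounds on $T$.

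First I would handle the transition term. Demanding $20\rd DS\sqrt{AT\log(T/\delta)}/T \leq \lambda/2$ is equivalent to
\begin{equation*}
\frac{T}{\log(T/\delta)} \;\geq\; \frac{4\cdot 400\, \rd^2 D^2 S^2 A}{\lambda^2},
\end{equation*}
which is the standard implicit condition one encounters when converting $\tilde O(\sqrt{T})$ regret into a PAC guarantee. I would invoke a routine inversion lemma that says: if $T \geq 2\beta \log(2\beta/\delta)$ for $\beta$ the right-hand-side constant above, then $T/\log(T/\delta) \geq \beta$. Matching constants gives precisely the first branch in the max of eq.~\ref{eq:pac_thresh}, with the factor $4^2\cdot 20^2$ absorbing the $2$ from the inversion step.

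Next I would handle the heavy-tailed term. Dividing by $T$ and requiring
\begin{equation*}
(2C_\epsilon+1) v^{\oeps}(7c\iota)^{\eeps}(SA)^{\oeps} T^{-\eeps} \;\leq\; \lambda/2
\end{equation*}
and raising both sides to the power $(1+\epsilon)/\epsilon$ yields the implicit inequality $T \geq \alpha \iota = \alpha \log(2SAT/\delta)$, where $\alpha$ has exactly the form stated in the corollary (the exponent $(1+\epsilon)/\epsilon$ on $(4C_\epsilon+2)$ coming from raising $2(2C_\epsilon+1)/\lambda$ to that power, and the $v^{1/\epsilon}$, $(SA)^{1/\epsilon}$, and $7c$ factors emerging identically). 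To close this into an explicit bound on $T$, I would again apply an inversion lemma of the form: if $T \geq \alpha \log(2SA/\delta) + 2\alpha \log(\alpha/\delta)$, then $T \geq \alpha \log(2SAT/\delta)$. This is the content of the second branch in the max.

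The main obstacle is the inversion step on the second (heavy-tailed) term, since the dependence on $T$ is implicit on both sides and the exponents involve $\epsilon$. The trick is to split $\log(2SAT/\delta) = \log(2SA/\delta) + \log T$, bound $\log T$ using the candidate value $T_0 = \alpha\log(2SA/\delta) + 2\alpha\log(\alpha/\delta)$, and verify the resulting inequality holds using $\log(1+x) \leq x$ and $\log\log \alpha \leq \log\alpha$. Once the two inversion lemmas are in place, combining the two sufficient conditions by taking a maximum gives eq.~\ref{eq:pac_thresh} and completes the proof.
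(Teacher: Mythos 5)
Your proposal is correct and follows essentially the same route as the paper: decompose the bound of Theorem \ref{thm:minimax} into the transition term and the heavy-tailed term, require each divided by $T$ to be at most $\lambda/2$, take the first branch from the standard \ucrl PAC inversion, and close the implicit inequality $T \geq \alpha\log(2SAT/\delta)$ for the second term by splitting the logarithm and bounding $\log T$ via $x > 2\log x$, exactly as in Appendix B.
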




\paragraph{Problem-dependent regret bound.} We also have a logarithmic regret bound dependent on the gap between the best and second-best policies, as in \ucrl. To simplify the statement of the theorem, we only consider the heavy-tailed regime, i.e. that in which equation \ref{eq:robust_conc} is satisfied. We use $\lambda$ to denote gaps in reward values, to differentiate from $\epsilon$ for the reward distribution.

\begin{theorem}
\label{thm:gap_dependent_bound}
For any initial state $s\in\mathcal{S}$, any $T$ satisfying equation \ref{eq:robust_conc}, and any $\lambda>0$, with probability at least $1-3\delta$ the regret of \algo is 
\begin{equation*}
    \Delta(M, s, T) \leq 7c\iota\left(4C_\epsilon+2\right)^{\frac{1+\epsilon}{\epsilon}}\left(\frac{SA}{\lambda}\right)^{1/\epsilon} +\lambda T
\end{equation*}
Furthermore, let $g\defeq \rho^*(M) - \max_{s\in S}\max_{\pi:\mathcal{S}\rightarrow\mathcal{A}} \left\{\rho(M,\pi,s):\rho(M\pi,s) > \rho^*(M) \right\}$
be the gap between the average reward of the best and second-best policies. Then, the expected regret of \algo (with parameter $\delta\defeq\frac{1}{3T}$) is bounded by 
\begin{equation*}
\begin{split}
    \mathbb{E}[\Delta(M, s, &T)] \leq 7c\iota\left(4C_\epsilon+2\right)^{\frac{1+\epsilon}{\epsilon}}\left(\frac{2SA}{g}\right)^{1/\epsilon}
    +\\ 
    &\sum_{s,a} \left\lceil 1 + \log_2\left(\max_{\pi:\pi(s)=a} T_{\pi}\right)\right\rceil \max_{\pi:\pi(s)=a} T_\pi 
\end{split}
\end{equation*}
\end{theorem}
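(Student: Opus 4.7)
The plan is to mirror the Jaksch--Ortner--Auer gap-dependent analysis (Theorems 4 and 11 of \citet{jaksch2010near}), adapting each step to the heavy-tailed reward regime by replacing the Hoeffding-type confidence interval for the rewards with the robust interval \ref{eq:r_interval}. I would first restrict attention to the regime where $T$ is large enough that eq.\ \ref{eq:robust_conc} governs the reward deviations, so the dominant regret term is the heavy-tailed term rather than the $DS\sqrt{AT}$ transition term (this is what the problem statement refers to as ``only consider the heavy-tailed regime'').

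\textbf{Part 1: the $C + \lambda T$ bound.} Starting from the per-episode regret decomposition used for Theorem \ref{thm:minimax}, the reward contribution in episode $k$ is controlled by $v^{\oeps}(7c\iota / N_k(s,a))^{\eeps}$ up to the constant $(4C_\epsilon + 2)$ (the $C_\epsilon$ term arises from truncation bias that one absorbs exactly as in the proof of Theorem \ref{thm:minimax}). I would split each $(s,a)$'s visits into two groups: those for which
\[
v^{\oeps}\!\left(\tfrac{7c\iota}{N_k(s,a)}\right)^{\!\eeps}\!\!\left(4C_\epsilon+2\right) \le \lambda,
\]
whose aggregate contribution to the regret is at most $\lambda T$; and those for which the inequality fails, i.e.\ $N_k(s,a) < 7c\iota (4C_\epsilon+2)^{(1+\epsilon)/\epsilon} v^{1/\epsilon} \lambda^{-(1+\epsilon)/\epsilon}$. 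Summing the threshold over all $SA$ state-action pairs yields the advertised $7c\iota(4C_\epsilon+2)^{(1+\epsilon)/\epsilon}(SA/\lambda)^{1/\epsilon}$ bound, and the transition-probability regret is by assumption absorbed into this heavy-tailed-dominated regime.

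\textbf{Part 2: the gap-dependent expected bound.} Here I would invoke optimism: since $\tilde\rho_k \ge \rho^\ast(M)$ with high probability, any suboptimal policy $\tilde\pi_k$ played by \algo satisfies $\rho^\ast(M) - \rho(M,\tilde\pi_k) \ge g$ by definition of the gap. This means every suboptimal episode contributes at least $g$ times the length of the episode to the regret. Combining with Part 1 applied at $\lambda = g/2$ shows that once $N_k(s,a)$ exceeds the Part 1 threshold with $\lambda \leftarrow g/2$, the algorithm can no longer play any policy other than $\pi^\ast$ for that state-action pair. Running \algo with $\delta = 1/(3T)$, the failure event contributes $O(1)$ to expectation (following the union bound argument on p.\ 31 of \citet{jaksch2010near}), which turns the high-probability bound into the claimed expected bound and produces the $(2SA/g)^{1/\epsilon}$ factor.

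\textbf{The residual mixing-time term.} The sum $\sum_{s,a} \lceil 1 + \log_2 \max_{\pi:\pi(s)=a} T_\pi\rceil \max_{\pi:\pi(s)=a}T_\pi$ accounts for the \ucrl doubling scheme: an episode ends only when some $v_k(s,a)$ matches $N_k(s,a)$, so the number of episodes in which a given $(s,a)$-pair appears before its count crosses the Part 1 threshold is at most $\lceil 1 + \log_2(\cdot)\rceil$, and each such episode lasts at most $\max_{\pi:\pi(s)=a}T_\pi$ steps before returning to $s$. I would reproduce the bookkeeping argument of \citet{jaksch2010near} essentially verbatim for this piece, since it is independent of the reward noise model. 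The main obstacle, and where I expect to spend the most care, is tracking the $C_\epsilon$-dependent constants through the truncation-bias control so that they consolidate into the clean factor $(4C_\epsilon+2)^{(1+\epsilon)/\epsilon}$; everything else is a direct transplant of the UCRL2 analysis with eq.\ \ref{eq:r_interval} replacing the Hoeffding interval on rewards.
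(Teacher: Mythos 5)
Your overall plan (transplant Theorems 4 and 11 of \citet{jaksch2010near} with the robust reward interval) is the right one, and your Part 2 skeleton --- specialize to $\lambda=g/2$, use $\delta=1/(3T)$ to absorb the failure event into an $O(1)$ expected contribution, and charge the residual regret of optimal-policy episodes to the doubling/transient term --- matches the paper. But the central step of Part 1 is not the paper's argument and, as written, does not go through. You attribute regret to individual visits according to whether the reward confidence width $v^{\oeps}(7c\iota/N_k(s,a))^{\eeps}$ exceeds $\lambda$, and claim the ``small-width'' visits contribute at most $\lambda$ each. The per-step regret $\rho^*-\bar r(s_t,a_t)$ is not bounded by the reward confidence width at that visit: the episode regret decomposition (eq.\ \ref{eq:episode_bound}) also contains the transition-error term $\textbf{v}_k(\widetilde{\textbf{P}}_k-\textbf{P}_k)\textbf{w}_k$ and the martingale term $\textbf{v}_k(\textbf{P}_k-\textbf{I})\textbf{w}_k$, which scale like $\sqrt{L}$ over the relevant steps and cannot be allocated visit-by-visit; saying they are ``absorbed by assumption'' does not make the per-visit charging valid. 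Similarly, the claim that once $N_k(s,a)$ crosses a threshold ``the algorithm can no longer play any policy other than $\pi^*$ for that state-action pair'' is an arm-elimination statement that \ucrl does not satisfy: the optimistic policy is determined jointly by all confidence sets, and there is no per-$(s,a)$ commitment guarantee.

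The missing idea is the episode-level self-bounding argument. Define an episode to be $\lambda$-bad if its \emph{average} regret exceeds $\lambda$, and let $L_\lambda$ be the total number of steps in $\lambda$-bad episodes. Non-bad episodes trivially contribute at most $\lambda T$. For the bad episodes, one reruns the \emph{entire} regret decomposition of Theorem \ref{thm:minimax} restricted to the set $K_\lambda$ (this needs the auxiliary bound $\sum_{k\in K_\lambda}\sum_{s,a} v_k(s,a)/\max\{1,N_k(s,a)\}^{\eeps}\leq C_\epsilon(L_\lambda SA)^{\oeps}$, the analogue of eq.~27 in \citet{jaksch2010near}), obtaining $\Delta'_\lambda\leq 2B_\epsilon\,\iota^{\eeps}(L_\lambda SA)^{\oeps}$ once $T$ satisfies eq.\ \ref{eq:heavy_cond} so that the $DS\sqrt{A L_\lambda}$ transition term is dominated. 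Combining with the defining property $\lambda L_\lambda\leq\Delta'_\lambda$ and solving for $L_\lambda$ yields $L_\lambda\leq 7c\iota\left(\frac{4C_\epsilon+2}{\lambda}\right)^{\frac{1+\epsilon}{\epsilon}}(SA)^{1/\epsilon}$, from which both parts of the theorem follow. Your constant-tracking concern is legitimate, but it is downstream of this structural fix.
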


Theorem \ref{thm:gap_dependent_bound} highlights that as $\epsilon$ becomes arbitrarily small, the problem-dependent regret bound goes to infinity. In theorem \ref{thm:lower_bound}, we show that this dependency is inevitable.


\paragraph{Regret under changing MDP.} \algo is designed to be robust to heavy-tailed reward distributions. In this section, we consider an orthogonal notion of robustness. Namely, if the MDP is allowed to change up to $\ell-1$ times after the initial MDP, under the condition that the diameter never exceeds $D$, we have the following upper bound on the regret of \algo.

\begin{theorem}
\label{thm:changing}
Suppose $\epsilon < 1$ and $T \in O\left(\frac{D^{\frac{2+2\epsilon}{1-\epsilon}}S^{\frac{2\epsilon}{1-\epsilon}}}{A}\right)$
Restarting \algo with parameter $\frac{\delta}{\ell^2}$ at steps $\left\lceil\frac{i^{(1+2\epsilon)/\epsilon}}{\ell^{(1+\epsilon)/\epsilon}} \right\rceil$ for $i=1,2,3,\ldots$, the regret of \algo is upper bounded by 
\begin{equation}
\rd\ell^{\frac{\epsilon}{1+2\epsilon}}T^{\frac{1+\epsilon}{1+2\epsilon}}(SA)^{\oeps}    
\end{equation}
with probability at least $1-\delta$.
\end{theorem}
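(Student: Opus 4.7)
The plan is to mimic the changing-MDP analysis of \citet{jaksch2010near} (Theorem 6 in that paper), but propagate the heavy-tail confidence scaling from Theorem \ref{thm:minimax} through the restart schedule. Let $t_i \defeq \lceil i^{(1+2\epsilon)/\epsilon}/\ell^{(1+\epsilon)/\epsilon}\rceil$ and let phase $i$ be the block $[t_i, t_{i+1})$. First I would determine the number of phases $m$ needed to cover $T$ steps by inverting $t_m \asymp T$, obtaining $m = \Theta\!\left(\ell^{(1+\epsilon)/(1+2\epsilon)} T^{\epsilon/(1+2\epsilon)}\right)$, and the length of phase $i$ as $\tau_i \asymp i^{(1+\epsilon)/\epsilon}/\ell^{(1+\epsilon)/\epsilon}$.

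Next, I would separate the phases into two groups. Since the MDP changes at most $\ell-1$ times, at most $\ell$ phases contain a change; in each such ``bad'' phase the regret is trivially bounded by $\rd \tau_i$. In every other phase the MDP is stationary, so Theorem \ref{thm:minimax}, applied with confidence $\delta/\ell^2$ on a horizon of length $\tau_i$, yields regret at most
\begin{equation*}
20\rd DS\sqrt{A\tau_i \iota_i} + (2C_\epsilon+1) v^{\oeps}(7c\iota_i)^{\eeps}(SA\tau_i)^{\oeps},
\end{equation*}
where $\iota_i$ is the logarithmic factor with $\delta$ replaced by $\delta/\ell^2$. Summing the heavy-tail term over $i\leq m$ gives
\begin{equation*}
\sum_{i=1}^m (SA\tau_i)^{\oeps} \asymp (SA)^{\oeps} \ell^{-1/\epsilon}\sum_{i=1}^m i^{1/\epsilon} \asymp (SA)^{\oeps} \frac{m^{(1+\epsilon)/\epsilon}}{\ell^{1/\epsilon}},
\end{equation*}
and substituting the value of $m$ collapses the powers of $\ell$ to $\ell^{\epsilon/(1+2\epsilon)}$ and the powers of $T$ to $T^{(1+\epsilon)/(1+2\epsilon)}$, producing the target rate. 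The ``bad'' phases contribute at most $\ell \cdot \rd\cdot \tau_m$, which up to constants matches the same $\ell^{\epsilon/(1+2\epsilon)} T^{(1+\epsilon)/(1+2\epsilon)}$ scaling because $\tau_m/m \asymp T/m \asymp \ell^{-(1+\epsilon)/(1+2\epsilon)} T^{(1+\epsilon)/(1+2\epsilon)}$.

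The transition-uncertainty term $DS\sqrt{A\tau_i}$ summed over phases behaves like $DS\sqrt{A}\cdot \ell^{-(1+\epsilon)/(2\epsilon)} m^{(1+3\epsilon)/(2\epsilon)}$; a short calculation shows this is dominated by the heavy-tail term precisely when $T$ lies in the range $O\!\left(D^{(2+2\epsilon)/(1-\epsilon)}S^{2\epsilon/(1-\epsilon)}/A\right)$ assumed in the theorem, which is where the stated regime-condition enters. Finally, high probability is obtained by a union bound: restarting with parameter $\delta/\ell^2$ controls the failure probability of Theorem \ref{thm:minimax} within each phase, and the total failure probability over the $m$ phases is kept below $\delta$ using the standard $1/i^2$-style summability together with the assumed regime on $T$.

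The main technical obstacle I anticipate is purely arithmetic: one has to verify that the exponents $(1+2\epsilon)/\epsilon$ and $(1+\epsilon)/\epsilon$ in the restart schedule are indeed the ones that balance the per-phase minimax regret against the number of phases (this is the heavy-tail analogue of balancing $i^2/\ell$ against $\sqrt{\tau}$ in the bounded-reward case), and that the condition on $T$ is exactly what is required to suppress the $DS\sqrt{AT}$ contribution relative to the heavy-tail contribution. Once this tuning is pinned down, the remaining steps (union bound, trivial bound on changed phases, summation) are routine.
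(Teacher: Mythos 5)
Your proposal matches the paper's proof in all essential respects: the same restart schedule and count of phases $n \asymp \ell^{(1+\epsilon)/(1+2\epsilon)}T^{\epsilon/(1+2\epsilon)}$, the same trivial $\rd\cdot(\text{phase length})$ bound on the at most $\ell$ phases containing a change, the same per-phase invocation of Theorem \ref{thm:minimax} with parameter $\delta/\ell^2$, and the same use of the regime condition on $T$ to let the heavy-tailed term absorb the $DS\sqrt{AT_i}$ term. The only (immaterial) difference is that you sum $\tau_i^{1/(1+\epsilon)}$ directly via $\sum_{i\le m} i^{1/\epsilon}\asymp m^{(1+\epsilon)/\epsilon}$ where the paper applies Jensen's inequality to $\sum_i T_i^{1/(1+\epsilon)}$; both yield the identical $\ell^{\epsilon/(1+2\epsilon)}T^{(1+\epsilon)/(1+2\epsilon)}(SA)^{1/(1+\epsilon)}$ rate.
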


We note that this theorem skips the case where $\epsilon=1$, which recovers the result of Theorem 6 in \citet{jaksch2010near} up to a constant multiplicative factor. 


\subsection{\algoq Results}

We first state the following regret bounds for \algoq with UCB-Hoeffding and UCB-Bernstein:


\begin{theorem}
\label{thm:hqregret}
In the finite-horizon episodic MDP setting, the regret of \algoq with UCB-Hoeffding is $\Theta\left(r_{max}H^2\sqrt{SAT} + H^2(SA\iota)^{\eeps}T^{\oeps}\right)$.
\end{theorem}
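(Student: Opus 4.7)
The plan is to adapt the Q-learning regret analysis of \citet{jin2018q} to handle the truncated reward update and the augmented bonus $b_t' = b_t + 8Hu^{\oeps}(\iota/t)^{\eeps}$. For $(s,a,h)$ fixed, let $t = N_h^k(s,a)$ denote the visit count at the start of episode $k$, let $k_1,\dots,k_t$ be the episodes in which those visits occurred, and let $\alpha_t^i$ be the accumulated learning-rate weights of Jin et al. Unrolling the update, using $\sum_{i=0}^t\alpha_t^i=1$ and $Q_h^*=\bar r+\mathbb{P}V_{h+1}^*$, gives the identity
\begin{equation*}
Q_h^k(s,a) - Q_h^*(s,a) = \alpha_t^0\bigl(Hr_{max}-Q_h^*(s,a)\bigr) + \sum_{i=1}^t \alpha_t^i \Big[r_{k_i}\indl - \bar r(s,a) + (V_{h+1}^{k_i} - \mathbb{P}V_{h+1}^*)(s_{h+1}^{k_i}) + b_i'\Big].
\end{equation*}
The proof then reduces to (i) establishing optimism $Q_h^k\geq Q_h^*$ with high probability, (ii) running the Jin et al.\ recursion on $\phi_h^k = (V_h^k - V_h^{\pi_k})(s_h^k)$, and (iii) summing the induced bonus contributions across $(s,a,h)$.

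For step (i), the only new ingredient is the reward term, which I decompose as $r\indl-\bar r = (r\indl - \mathbb{E}[r\indl]) + (\mathbb{E}[r\indl] - \bar r)$. By the $(1+\epsilon)$-moment assumption and $B_i=(ui/\iota)^{\oeps}$, the bias is deterministically bounded by $|\mathbb{E}[r\indg]| \leq u/B_i^{\epsilon} = u^{\oeps}(\iota/i)^{\eeps}$; the centered martingale term has per-step second moment at most $uB_i^{1-\epsilon}$, and after $\alpha_t^i$-weighting a Freedman/Azuma argument (mirroring the proof of eq.~\ref{eq:robust_conc} for the truncated mean) controls it by a matching $O(u^{\oeps}(\iota/t)^{\eeps})$. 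An extra factor of $H$ covers the value-function recursion, so both pieces are absorbed by the heavy-tailed summand in $b_t'$; the transition-side error $(V_{h+1}^{k_i}-\mathbb{P}V_{h+1}^*)(s_{h+1}^{k_i})$ is absorbed by the Hoeffding bonus $b_t=O(r_{max}\sqrt{H^3\iota/t})$ exactly as in \citet{jin2018q}. A union bound over $(s,a,h,t)$ supplies the $\iota$ factor.

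For steps (ii)--(iii), optimism yields regret $\leq \sum_k \phi_1^k$. Unrolling the Jin et al.\ recursion over the $H$ in-episode steps incurs only an $(1+1/H)^H=O(1)$ multiplicative factor, together with one factor of $H$ from summing across $h$. The Hoeffding part of the bonus then contributes
\begin{equation*}
H\cdot r_{max}\sqrt{H^3\iota}\sum_{s,a,h}\sum_{t=1}^{N_h(s,a)} t^{-1/2} = O\bigl(r_{max}H^2\sqrt{SAT\iota}\bigr)
\end{equation*}
by Cauchy-Schwarz together with $\sum_{s,a,h}N_h(s,a)=T$, while the heavy-tailed part contributes
\begin{equation*}
H^2 u^{\oeps}\iota^{\eeps}\sum_{s,a,h}\sum_{t=1}^{N_h(s,a)} t^{-\eeps} = O\bigl(H^2 (SA\iota)^{\eeps}T^{\oeps}\bigr)
\end{equation*}
after H\"older's inequality on the concave map $x\mapsto x^{\oeps}$, which gives $\sum_{s,a,h}N_h(s,a)^{\oeps} \leq (SAH)^{\eeps}T^{\oeps}$. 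These two sums match the two terms in the stated upper bound.

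The main obstacle will be step (i): carefully book-keeping the $\alpha_t^i$-weighted interaction between the index-dependent truncation threshold $B_i$ and the martingale concentration, so that the constant $8H$ in the augmented bonus actually dominates both the bias and the noise uniformly in $t$ and $h$; the fact that $B_i$ depends on the per-sample index $i$ rather than the total count $t$ requires care when adapting the standard truncated-mean argument. For the matching $\Omega$ lower bound, the $r_{max}H^2\sqrt{SAT}$ term comes from the standard tabular MDP hardness result, while the $H^2(SA\iota)^{\eeps}T^{\oeps}$ term follows from the heavy-tailed lower bound constructed elsewhere in the paper.
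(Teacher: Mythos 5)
Your proposal is correct and follows essentially the same route as the paper: adapt the recursion of \citet{jin2018q}, absorb the truncated-reward error into the augmented bonus $b_t'$, and sum the two bonus components over visits to obtain the $r_{max}H^2\sqrt{SAT}$ and $H^2(SA\iota)^{\eeps}T^{\oeps}$ terms. The only real divergence is local: where you re-derive the weighted truncated-reward concentration from scratch (bias plus a Freedman argument on the $\ait$-weighted martingale), the paper sidesteps the index-dependent-threshold bookkeeping you flag by using $\ait\leq 2H/t$ to reduce to the unweighted truncated-mean bound of \citet{bubeck2013bandits}, which already accommodates per-sample thresholds $B_i$.
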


\begin{theorem}
\label{thm:hqbregret}
In the finite-horizon episodic MDP setting, the regret of \algoq with UCB-Bernstein is 
\begin{equation*}
\begin{split}
    \Theta\Bigl(&\sqrt{H^3\rmax^3 SAT\iota} + H^2(SA\iota)^{\eeps}T^{\oeps}\\ 
    + &\sqrt{H^9\rmax^2u^\oeps S^3A^3\iota^3} + \sqrt{H^{\frac{1+4\epsilon}{\epsilon}}\rmax S^2A^2\iota^2}\\
    + &H^{\frac{1+3\epsilon}{\epsilon}}\sqrt{S^3A^3\iota^4\epsilon}\Bigr).
\end{split}
\end{equation*}
\end{theorem}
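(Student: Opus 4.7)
The plan is to adapt the proof of the UCB-Bernstein Q-learning regret bound of Jin et al.\ (2018) to the heavy-tailed setting, using the same template as for Theorem \ref{thm:hqregret} but swapping the Hoeffding transition bonus for the Bernstein one. The regret decomposes into two largely independent pieces: a reward-estimation contribution, which yields the $H^2(SA\iota)^{\eeps}T^{\oeps}$ term exactly as in Theorem \ref{thm:hqregret}, and a transition-estimation contribution, which now uses the empirical variance bonus and produces the remaining four terms.

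First I would establish two high-probability events. The reward event says that every truncated empirical mean satisfies $|\hat r_h^{(t)}(s,a) - \bar r(s,a)| \leq u^{\oeps}(c\iota/t)^{\eeps}$ uniformly in $(s,a,h,t)$, obtained by applying (\ref{eq:robust_conc}) at every visit with $\delta' = \delta/(2SAT)$ and union-bounding over $\mathcal{S}\times\mathcal{A}\times[H]\times[T]$. The transition event is the Freedman concentration of $(\mathbb{P}-\hat{\mathbb{P}})V^*_{h+1}$ summed along the Q-learning weights $\ait$, together with the analogous bound for its square. On the intersection of these events I would prove by backward induction on $h$ that $Q_h(s,a) \geq Q^*_h(s,a)$: the original Bernstein bonus $b_t$ absorbs the transition deviation through the empirical variance, while the added term $c_2 H u^{\oeps}(\iota/t)^{\eeps}$ dominates both the truncation bias $|\mathbb{E}[r_h(s,a)\indl] - \bar r(s,a)|$ and the deviation of the truncated sample mean from its expectation, provided $c_2$ is of order $4^{(1+\epsilon)/\epsilon}$, matching the constant in (\ref{eq:trunc}).

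The core analysis then follows the Jin et al.\ recursion on $\phi^k_h \defeq (V^k_h - V^*_h)(s^k_h)$ and $\delta^k_h \defeq (V^k_h - V^{\pi_k}_h)(s^k_h)$, rescaled so that $V^*_h \in [0, H\rmax]$. Unrolling the $\ait$ weights and applying Cauchy-Schwarz against the total-variance identity $\skh \nsv \leq O(HT\rmax^2 + H^3\rmax^2 SA)$ produces the leading $\sqrt{H^3 \rmax^3 SAT\iota}$ Bernstein term. Propagating the added heavy-tailed bonus through one level of the recursion yields $\sqrt{H^9\rmax^2 u^{\oeps} S^3A^3\iota^3}$; the cross product between that bonus and the Bernstein variance factor gives $\sqrt{H^{(1+4\epsilon)/\epsilon}\rmax S^2A^2\iota^2}$; and the residual sum $\sum_{t=1}^{n} t^{-\eeps} = O(n^{\oeps})$ distributed via pigeonhole across state-action pairs produces the final $H^{(1+3\epsilon)/\epsilon}\sqrt{S^3A^3\iota^4\epsilon}$ piece.

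The main obstacle is controlling the cross terms in which the $t^{-\eeps}$ decay of the heavy-tailed bonus meets the Bernstein variance factor, namely expressions such as $\skh (\iota/N_h(s^k_h, a^k_h))^{\eeps}\sqrt{\nsv \cdot \iota/N_h(s^k_h, a^k_h)}$, which are not immediately absorbable by a single Cauchy-Schwarz step. I would bound these by H\"older with a carefully chosen conjugate pair, together with the pigeonhole inequality $\sum_{s,a} N(s,a)^{\alpha} \leq (SA)^{1-\alpha} T^{\alpha}$ for $\alpha\in(0,1)$, applied separately at each stage of the recursion. Collecting the four transition-side contributions, adding the $H^2(SA\iota)^{\eeps}T^{\oeps}$ reward-side piece, and taking a final union bound over the two events yields the claimed regret.
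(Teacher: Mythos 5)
Your overall template (two high-probability events, optimism by backward induction, the Jin et al.\ recursion on $\phi^k_h$ and $\delta^k_h$, Cauchy--Schwarz against the total-variance identity) matches the paper's, and your accounting for the leading $\sqrt{H^3\rmax^3 SAT\iota}$ and $H^2(SA\iota)^{\eeps}T^{\oeps}$ terms is sound. But there is a genuine gap in how you handle the Bernstein bonus itself. You assert that ``the original Bernstein bonus $b_t$ absorbs the transition deviation through the empirical variance'' and that the only needed addition is $c_2Hu^{\oeps}(\iota/t)^{\eeps}$. This is not enough for the optimism induction to close. The empirical variance $W_t(s,a,h)$ is computed from the algorithm's own estimates $V^{k_i}_{h+1}$, not from $V^*_{h+1}$, and in the heavy-tailed setting the error $V^{k_i}_{h+1}-V^*_{h+1}$ carries the accumulated truncation bonuses of order $Hu^{\oeps}(\iota/n)^{\eeps}$. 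Consequently the gap $|W_t - [\nsv](s,a)|$ acquires new terms of order $\frac{H^3u^\oeps\iota^\eeps SA}{t} + \frac{H^3u^\oeps(SA\iota)^\eeps}{t^\eeps}$, which the paper derives via a heavy-tailed analogue of Jin et al.'s weighted-sum Lemma C.7 (bounding $\sum_k w_k\phi^k_h$ with an extra $H^2u^\oeps\iota^\eeps\left(SA\|w\|_\infty + (SA\|w\|_\infty)^{\eeps}\|w\|_1^{\oeps}\right)$ contribution). To keep Freedman's inequality dominated by a bonus expressed in terms of $W_t$, the Bernstein bonus must itself be enlarged by terms such as $\frac{H^{\frac{1+2\epsilon}{\epsilon}}\iota\sqrt{SA\epsilon}}{t}$ and $\frac{H^2\iota\sqrt{\rmax^3}}{t}$ (via AM--GM); without them, $Q^k_h\geq Q^*_h$ fails and the whole recursion collapses. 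Your plan to treat the bonus--variance interaction only at the regret-summation stage, via H\"older and pigeonhole, addresses the wrong place: by that point optimism must already hold.

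A secondary consequence is that your attribution of the lower-order terms is off. The terms $\sqrt{H^{\frac{1+4\epsilon}{\epsilon}}\rmax S^2A^2\iota^2}$ and $H^{\frac{1+3\epsilon}{\epsilon}}\sqrt{S^3A^3\iota^4\epsilon}$ do not arise from cross products in the regret sum; they come from summing the \emph{new bonus terms} over $(k,h)$ using $\skh \frac{1}{n^k_h}\leq HSA\iota$, and from plugging the enlarged bound on $\skh W_{n^k_h}$ back into $\sqrt{\sum W}\cdot\sqrt{H^2\rmax SA\iota}$. Once the bonus is corrected as above, these sums are routine and no H\"older argument is needed.
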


Again, we note that that for sufficiently large $T$, the heavy-tailed regret term dominates; i.e. learning the rewards is more difficult than learning the transition probabilities. As expected, UCB-Bernstein exploration offers no improvement on the asymptotic regret. 


Notably, aside from the simple scaling due to $r_{max}$, our algorithm only necessitates another additive factor corresponding to the second term in the regret bounds (UCB-Bernstein also has some terms independent of $T$). Interestingly, our analysis shows that even though $Q$-learning does not explicitly perform mean estimation and is not amenable to all robust mean estimation techniques, the truncation technique can still be applied. This indicates that truncation can potentially be used to make a broad class of optimistic confidence-interval based algorithms robust in the heavy-tailed setting.

Furthermore, although this setting is slightly different from that of \algo, it is clear that the dependence on $SA$ is better for \algoq by a factor of $(SA)^\frac{1-\epsilon}{1+\epsilon}$. As we will show, the dependence of this bound on $S, A, T$ is tight. Finally, as for \algo, this minimax regret bound also naturally translates into a PAC bound.


\subsection{Lower bound}
We prove the following lower bound on the expected regret for any algorithm in this heavy-tailed MDP setting.
\begin{theorem}
\label{thm:lower_bound}
For any fixed $T$ and algorithm, there exists a communicating MDP $M$ with diameter $D$ such that the expected regret of the algorithm is $\Omega\left((SA)^{\eeps}(T)^{\oeps}\right)$. In the finite-horizon episodic setting, there exists a MDP such that the expected regret is $\Omega\left(H(SA)^{\eeps}(T)^{\oeps}\right)$.
\end{theorem}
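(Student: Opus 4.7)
The proof plan is to reduce to the heavy-tailed multi-armed bandit lower bound of \citet{bubeck2013bandits}, which asserts that any algorithm for a $K$-armed bandit with only finite $(1+\epsilon)$-th moments has worst-case expected regret $\Omega(K^{\eeps}T^{\oeps})$. The overall strategy is to construct hard MDP instances whose solution is no easier than solving an $SA$-armed heavy-tailed bandit problem, and then to transfer the bandit lower bound more or less directly.

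For the communicating-MDP case, I would build an MDP with $S$ states and $A$ actions per state whose transitions are deterministic and form a structure (e.g.\ a single cycle through all states) making any state reachable from any other within $D$ steps. I would assign to every state--action pair $(s,a)$ a heavy-tailed reward distribution $\mathcal{R}_{s,a}$, where in a ``base'' instance all $SA$ distributions share the same mean. For each candidate optimal pair $(s^*,a^*)$ I define a perturbed instance that shifts the mean of $\mathcal{R}_{s^*,a^*}$ up by a small gap $\Delta>0$ while preserving the moment bound. A standard change-of-measure argument (Le Cam's two-point method, or Fano across the $SA$ candidates) then shows that any algorithm must either visit each pair $\Omega(\Delta^{-(1+\epsilon)})$ times or misidentify the optimal pair with constant probability, giving regret $\Omega((SA)^{\eeps}T^{\oeps})$ after optimizing $\Delta$. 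Since the transition dynamics are essentially deterministic, the cost of navigating to each $(s,a)$ under diameter $D$ contributes only lower-order navigation overhead.

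For the finite-horizon episodic case, I would use a construction in which a single early-episode decision propagates through all $H$ subsequent steps, so that the per-episode regret of any suboptimal policy is scaled up by a factor of $H$ relative to a per-step bandit gap. Treating episodes as ``pulls'' in an $SA$-armed bandit while carefully accounting for the extra $H$-scaling of per-episode rewards in the moment bound amplifies the bandit lower bound by the desired factor of $H$, yielding $\Omega(H(SA)^{\eeps}T^{\oeps})$.

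The hard part will be constructing a family of heavy-tailed reward distributions that simultaneously has bounded $(1+\epsilon)$-th moment and admits a matching lower bound on KL divergence $\mathrm{KL}\gtrsim \Delta^{1+\epsilon}$ between neighbouring perturbations, since the sub-Gaussian families used in bounded-reward lower bounds are too concentrated to match the heavy-tailed concentration rate. A natural candidate is a two-point Bernoulli-like distribution that places a small probability $p \asymp \Delta^{1+\epsilon}$ on a large outlier value scaling as $p^{-1/(1+\epsilon)}$, so that the $(1+\epsilon)$-th moment stays constant while the KL divergence between shifted and unshifted versions is exactly of the required order. With such a family in hand, both the communicating and episodic constructions proceed by routine information-theoretic bookkeeping.
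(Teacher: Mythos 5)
Your high-level plan coincides with the paper's: embed the heavy-tailed bandit lower bound of \citet{bubeck2013bandits} into an MDP with $SA$ effective arms and transfer the $\Omega\left((SA)^{\eeps}T^{\oeps}\right)$ rate. The one structural difference is where the heavy tail lives. You place heavy-tailed reward distributions directly on state--action pairs of a deterministic-transition MDP, whereas the paper reuses the two-state construction of \citet{jaksch2010near} (figure \ref{fig:lower_bound}) and realizes the heavy-tailed arm through the \emph{transition} probabilities: reaching $s_1$ corresponds to drawing the rare large outlier, so the induced per-action return is exactly a scaled Bernoulli. Both embeddings are legitimate; yours avoids invoking the ``two-state MDP $\equiv$ Bernoulli reward'' equivalence, and your cycle-of-independent-bandits accounting does recover the correct $(SA)^{\eeps}$ dependence since $S\cdot A^{\eeps}(T/S)^{\oeps}=(SA)^{\eeps}T^{\oeps}$.

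The genuine gap is in the distribution family you propose for the change-of-measure step. You place probability $p\asymp\Delta^{1+\epsilon}$ on an outlier of size $p^{-\oeps}\asymp\Delta^{-1}$ and target $\mathrm{KL}\asymp\Delta^{1+\epsilon}$. This is mis-calibrated: with outlier value $M=\Delta^{-1}$, creating a mean gap of $\Delta$ requires shifting $p$ by $\Delta/M=\Delta^{2}$, giving $\mathrm{KL}\asymp(\Delta^{2})^{2}/p=\Delta^{3-\epsilon}$. Optimizing $\Delta$ then yields a lower bound of order $T^{(2-\epsilon)/(3-\epsilon)}$, which is strictly weaker than $T^{\oeps}$ for every $\epsilon<1$ (the exponents agree only at $\epsilon=1$). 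The required KL scaling is $\Delta^{(1+\epsilon)/\epsilon}$, not $\Delta^{1+\epsilon}$, and it is achieved by the parametrization the paper borrows from \citet{bubeck2013bandits}: outlier value $1/\gamma$ with $\gamma=(2\Delta)^{1/\epsilon}$ and success probability $\gamma^{1+\epsilon}$ versus $\gamma^{1+\epsilon}-\Delta\gamma$, so that the $(1+\epsilon)$-th moment is $O(1)$, the mean gap is $\Delta$, and $\mathrm{KL}\asymp\Delta^{2}\gamma^{1-\epsilon}\asymp\Delta^{(1+\epsilon)/\epsilon}$; setting $\Delta=(SA/T)^{\eeps}$ then gives the claimed rate. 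Substituting this family into your construction repairs the argument. For the episodic case your gap-propagation mechanism differs from the paper's $H$-fold chain concatenation; note that if you amplify per-episode returns by $H$ you must also track the resulting $H^{1+\epsilon}$ inflation of the moment bound, which you flag but do not resolve.
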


\begin{figure}[t!]
    \centering
    \begin{tikzpicture}[-latex, auto, node distance = 3.5 cm and 3.5 cm, on grid, semithick, state/.style={circle, draw, minimum width = 1.25cm}]
      \node[state] (A) {$s_0$};
      \node[state] (B) [right = of A] {$s_1$};
      \path (B) edge [bend right = 20] node[above = 0.1 cm] {$\delta$} (A);
      \path (A) edge [bend right = 15] node[above = 0.1 cm] {$\delta$} (B);
      \path (A) edge [bend right = 40, dashed] node[below = 0.1  cm] {$\delta+\lambda$} (B);
      \draw [->, dashed] (A) to[in=270, out=200, looseness=3.5] node[below left = 0.03 cm and 0.03 cm] {$1-\delta-\lambda$} (A);
      \draw [->] (A) to[in=90, out=160, looseness=3.5]  node[above left = 0.03 cm and 0.03 cm] {$1-\delta$} (A);
      \draw [->] (B) to[in=310, out=50, looseness=4.0] node[right = 0.05 cm] {$1-\delta$} (B);
    \end{tikzpicture}
    \caption{Two-state MDP for lower bound.}
    \label{fig:lower_bound}
\end{figure}
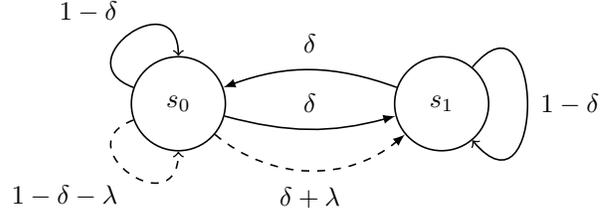

Intuitively, the proof of this theorem follows by combining the lower bound constructions for (problem-independent) heavy-tailed bandits in \citet{bubeck2013bandits} and finite-diameter MDPs in \citet{jaksch2010near}. The latter considers, without loss of generality, a simple two-state MDP (shown in figure \ref{fig:lower_bound}) in which for all actions except a single optimal action $a^*$, the probability of transitioning to the other state is $\delta$. However, when action $a^*$ is taken in $s_0$, the agent has probability $\delta+\lambda$ of transitioning to $s_1$. 

Now, recall that in the construction in \citet{bubeck2013bandits}, there are $K-1$ arms have Bernoulli reward scaled by $\frac{1}{\gamma}$ with parameter $\gamma^{1+\epsilon}-\Delta\gamma$ and one optimal arm with $\frac{1}{\gamma}$-scaled Bernoulli with parameter $\gamma^{1+\epsilon}$, where $\Delta$ is the gap in the expected reward of the optimal arm and that of other arms, and $\gamma \defeq (2\Delta)^{1/\epsilon}$. Then, directly setting $\Delta=(K/T)^{\eeps}$ yields a $K^{\eeps}T^{\oeps}$ regret bound. In our setting, noting that a two-state MDP with unknown transition probabilities is equivalent to learning a Bernoulli reward \citep{osband2016lower}, translating the bandit lower-bound construction to the MDP in figure \ref{fig:lower_bound} yields a lower bound of $\Theta\left((SA)^{\eeps}(T)^{\oeps}\right)$. 

For the episodic setting, we can concatenate $H$ of these MDPs together into a chain, as in \citet{jin2018q}. Hence, the total number of samples available for each step $h\leq H$ is now $T/H$, and we can rescale the bound accordingly.
\section{Experimental Results}
\label{sec:experiment}

\subsection{Tabular MDPs}

\begin{figure}[t!]
    \centering
    \begin{tikzpicture}[-latex, auto, node distance = 1.0cm and 1.6 cm, on grid, semithick, state/.style={circle, draw, minimum width = 1.1cm}]
      \node[state] (A) {$s_0$};
      \begin{scope}[node distance = 1.25 cm and 1.35 cm]
      \node[state] (B) [above right = of A] {$s_1$};
      \node[state] (F) [below right = of A] {$s_{l+1}$};
      \end{scope}
      \node[state] (C) [right = of B] {$s_2$};
      \node[] (D) [right = of C] {$\cdots$};
      \node[state] (E) [right = of D] {$s_l$};
      \node[state] (G) [right = of F] {$s_{l+2}$};
      \node[] (H) [right = of G] {$\cdots$};
      \node[state] (I) [right = of H] {$s_{2l+1}$};
      \path (A) edge [dashed] node[below = 0.1 cm] {$1$} (B);
      \path (B) edge [bend right=40, dashed] node[above=0.05 cm] {$1$} (A);
      \path (B) edge node[below = 0.05 cm] {$p$} (C);
      \path (C) edge node[below = 0.05 cm] {$p$} (D);
      \path (D) edge node[below = 0.05 cm] {$p$} (E);
      \path (A) edge node[above = 0.1 cm] {$1$} (F);
      \path (F) edge [bend left=40, dashed] node[below=0.05 cm] {$1$} (A);
      \path (F) edge node[below = 0.05 cm] {$p$} (G);
      \path (G) edge node[below = 0.05 cm] {$p$} (H);
      \path (H) edge node[below = 0.05 cm] {$p$} (I);      
      \path (C) edge [bend right = 40] node[above = 0.05 cm] {$1-p$} (B);
      \path (C) edge [bend left = 50, dashed] node[below = 0.05 cm] {$1$} (B);
      \path (D) edge [bend right = 40] node[above = 0.05 cm] {$1-p$} (C);
      \path (D) edge [bend left = 50, dashed] node[below = 0.05 cm] {$1$} (C);
      \path (E) edge [bend right = 40] node[above = 0.05 cm] {$1-p$} (D);
      \path (E) edge [bend left = 50, dashed] node[below = 0.05 cm] {$1$} (D);
      \path (G) edge [bend right = 40] node[above = 0.05 cm] {$1-p$} (F);
      \path (G) edge [bend left = 50, dashed] node[below = 0.05 cm] {$1$} (F);
      \path (H) edge [bend right = 40] node[above = 0.05 cm] {$1-p$} (G);
      \path (H) edge [bend left = 50, dashed] node[below = 0.05 cm] {$1$} (G);
      \path (I) edge [bend right = 40] node[above = 0.05 cm] {$1-p$} (H);
      \path (I) edge [bend left = 50, dashed] node[below = 0.05 cm] {$1$} (H);     
      \draw [] (E.25) arc (120:-120:3mm) node[below right = 0.1 cm and 0.1 cm] {$p$} (E);
      \draw [] (I.25) arc (120:-120:3mm) node[below right = 0.1 cm and 0.1 cm] {$p$} (I);
    \end{tikzpicture}
    \caption{DoubleChain MDP. The dashed and solid lines indicate the transition probabilities of the two actions.}
    \label{fig:double_chain}
\end{figure}
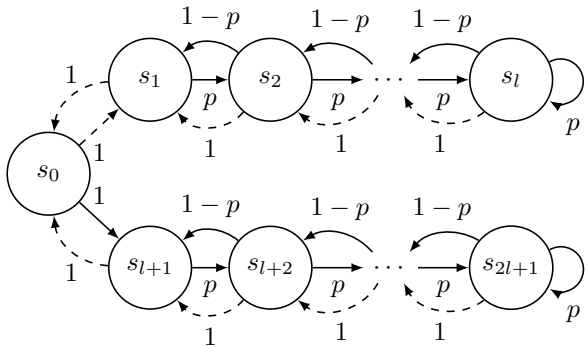

We first evaluate \algo and \algoq on synthetic MDPs (SixArms \citep{strehl2008analysis} and DoubleChain, shown in figure \ref{fig:double_chain}), comparing against classical light-tailed baselines: Gaussian Posterior Sampling (PSRL), Q-learning, and \ucrl. For PSRL, it is non-trivial to design a variant that accomodates general heavy-tailed reward distributions.  However, on light-tailed distributions, it is state-of-the-art \citep{osband2017posterior}. Hence, we include it as a benchmark in order to explicitly demonstrate that an algorithm designed for light-tailed rewards may perform poorly in the heavy-tailed setting. For \ucrl, to ensure a fair comparison, we use the heavy-tailed variant described in section \ref{sec:algorithm}. As discussed, this algorithm quickly devolves into random exploration, i.e. uniform selection of actions, and hence performs extremely poorly in our experiments. We therefore omit heavy-tailed \ucrl from our following results. For all of the above algorithms, we scale the confidence intervals by factors searched within $[1e-2, 1]$, as is standard practice \citep{lu2019optimal}. 

For \algo and \algoq, we used the truncated mean estimator, and $\epsilon=0.05$ in all experiments. For \algoq, we use the UCB-Hoeffding bonus.

For the reward distributions, we used symmetric Levy $\alpha$-stable distributions, for which the heaviness of the tail is controlled by $\alpha$ and the mean can be arbitrarily specified. In particular, the $(1+\epsilon)$-th moments of $\alpha$-stable distributions for $\epsilon < \alpha$ are bounded, i.e. these distributions are the ``heaviest'' possible in this setting \citep{dubey2019thompson}. Such distributions can be denoted as $\mathcal{L}(\mu, \alpha, \beta, \sigma)$, where $\mu$ is the mean, and $\alpha$, $\beta$, $\sigma$ the stable, skew, and shape parameters respectively. In all experiments, we set $\alpha=1.1$, and only consider such distributions with $\sigma=1$ that are symmetric (i.e. $\beta=0$). The full experimental details, including the specific reward distributions for each state-action pair, can be found in the appendix.

\paragraph{Results.} We report the total cumulative rewards for each enviroment, averaged over 30 random seeds, in figures \ref{sub:dc_all} and \ref{sub:sa_all}. \algo and \algoq clearly outperform PSRL and $Q$-learning on both MDPs. As expected, the latter two converge too quickly to a suboptimal policy, and never succeed in identifying the best transition. On the other hand, \algo and \algoq are able to perform enough exploration to discover better policies due to their appropriately-constructed confidence regions/bonuses. This is evidenced by the early iterations, in which PSRL and Q-learning initially achieve higher reward but ultimately fail to identify the best policies. 


Overall, these experiments underscore two complementary failures of light-tailed algorithms in heavy-tailed settings: they may converge too quickly to a suboptimal policies, and thus will fail to allocate enough samples to the optimal, heavy-tailed arms because of 1. the brittleness of the empirical mean estimator and 2. the too-tight confidence intervals.

\begin{figure*}[t!]
    \centering
    \begin{subfigure}{0.325\textwidth}
        \includegraphics[width=\linewidth]{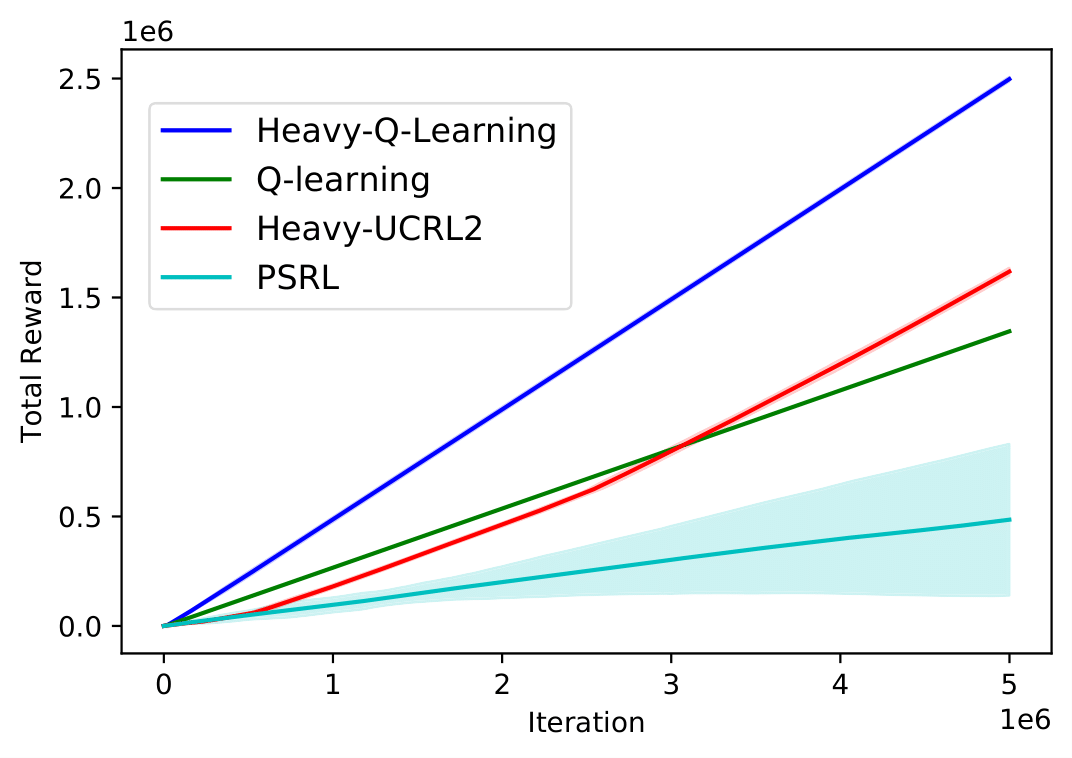}
        \caption{DoubleChain}
        \label{sub:dc_all}
    \end{subfigure}
    \begin{subfigure}{0.325\textwidth}
        \includegraphics[width=\linewidth]{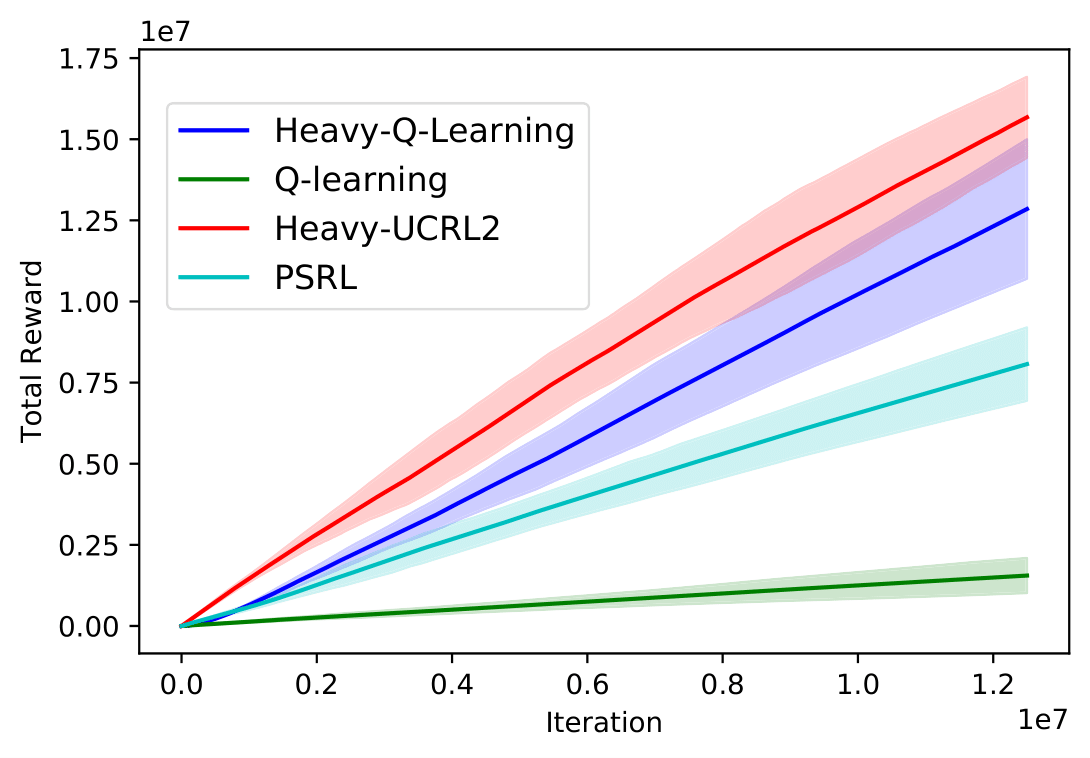}
        \caption{SixArms}
        \label{sub:sa_all}
    \end{subfigure}    
    \begin{subfigure}{0.325\textwidth}
        \includegraphics[width=\linewidth]{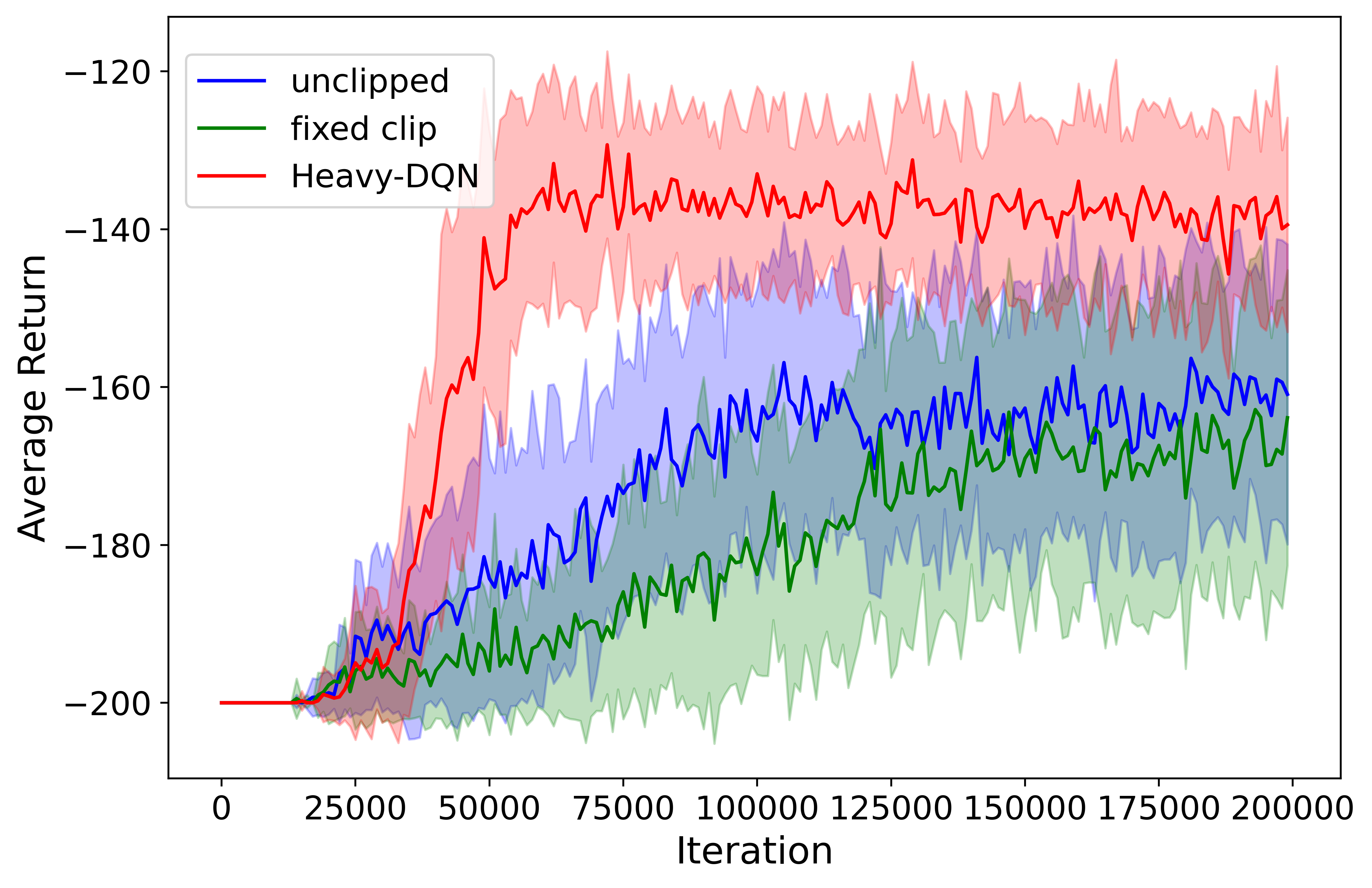}
        \caption{MountainCar-v0}
        \label{sub:mountaincar}
    \end{subfigure}
    \caption{Mean cumulative returns averaged over independent trials, with $\pm1$ standard deviation shaded.}
    \label{fig:exp_res}
\end{figure*}

\subsection{Deep Reinforcement Learning Experiments}
We tested our deep RL algorithm Heavy-DQN on the classical control environment of MountainCar. We perturbed the standard rewards with heavy-tailed noise following a symmetric $\alpha$-stable distribution with $\alpha=1.1$ and scale parameter $5$. The algorithm conservatively uses $\epsilon=0.05$. Our code is based off of the Dopamine reinforcement learning library \citep{castro18dopamine}, and the remaining hyperparameters were untuned from the defaults (restated in the appendix).

We compare our adaptive truncation method vs. a naive fixed clipping baseline, as well as an unclipped baseline, where the only hyperparameters tuned are the truncation coefficient $c$ as well as the clipping range. All other shared hyperparameters are held constant between algorithms. To ensure fair comparison, each algorithm is run over a fixed set of 24 random seeds. 

\paragraph{Results.} We plot the mean training return with $\pm1$ standard deviations shaded in figures \ref{sub:mountaincar}. The best $c$ was 0.015, and the best fixed clipping range was $[-5, +5]$. As expected, our adaptive truncation method outperforms both the fixed clipping and unclipped baseline. Heavy-DQN is able to quickly converge to a near-optimal policy, whereas the others become stuck in sub-optimal ones. Overall, these experiments demonstrate that truncation has promise in robustifying deep reinforcement learning algorithms with respect to reward noise. 


\section{Conclusion \& Discussion}
\label{conclusion}
We introduced the heavy-tailed reinforcement learning problem and proposed two algorithms, \algo and \algoq, that are provably efficient in this setting. More importantly, our results indicate that a broad-class of confidence-based RL algorithms are amenable to simple techniques from robust statistics. For example, state-of-the-art algorithms such as UCFH and UCBVI are likely compatible with the ideas behind \algo and \algoq \citep{dann2015sample,azar2017minimax}. However, it is unclear how such ideas can be applied to Bayesian algorithms, such as PSRL \citep{agrawal2017optimistic}.


All of our algorithms require a priori knowledge of $\epsilon$. However, this is a drawback of all bandit algorithms with heavy-tailed noise, and probably cannot be avoided in the absence of further assumptions \citep{shao2018almost}. We leveraged the median-of-means and truncated mean estimators for their theoretical guarantees, but both have their drawbacks in practical usage. The truncated mean estimator is simple to use and requires no overhead, but the confidence interval scaling factor can be large if the magnitude of the mean rewards are large, due to its dependence on the raw $(1+\epsilon)$-th moment. The median-of-means estimator is typically more robust and uses the centered moment bound, but has suboptimal constants. It also requires significantly larger time/space complexity (an additional multiplicative $O(T)$ factor for both).

Overall, we showed that our algorithms bridged theory and practice, providing significant empirical evidence that \algo and \algoq outperform classical RL algorithms on MDPs with heavy-tailed rewards even without relaxing their theoretical assumptions. We also proposed Heavy-DQN, an extension of \algoq to the deep reinforcement learning setting. Our results demonstrate that despite its simplicity, this adaptive clipping method is effective in stabilizing DQN under heavy-tailed reward noise. There are a number of straightforward extensions that may improve Heavy-DQN's performance and help it generalize to higher-dimensional environments, such as learned hash codes \citep{tang2017exploration}. 

In general, we proposed noisy rewards as a problem setting with real-world relevance. This, and its connections to clipping methods that are prevalent in RL application implementations, have been under-explored thus far and merit further study.



\bibliographystyle{plainnat}
\bibliography{references}

\appendix
\onecolumn

\title{Supplementary Material for "Robust Reinforcement Learning with Heavy-tailed Rewards"}



\section{Proof of Theorem \ref{thm:minimax}}
\label{app:minimax}



\subsection{Preliminaries}
\paragraph{Overview of \algo proof strategy.} Our regret analysis for \algo follows the standard strategy for optimistic algorithms \citep{osband2017posterior, jaksch2010near}. We first bound the regret in terms of the regret incurred in each episode and the random deviation of the rewards. We then consider the regret from episodes in which the confidence regions do not hold, i.e. the true MDP lies outside the confidence region. Finally, we bound the regret when the confidence regions do hold. This regret can be decomposed into the sum of three terms: the regret due to the error in transition probabilities, regret due to the error in rewards, and the regret due to extended value iteration.

The main differences in the analysis are due to the unboundedness and heavy-tailed nature of the rewards. From section 3, we show that for reward distributions with bounded $(1+\epsilon)$-th moments ($\epsilon \in (0, 1]$), we have concentration bounds similar to that of Hoeffding's inequality for arbitrary distributions over $[0, 1]$. We also leverage the boundedness of the 1st moment (i.e. the mean) to bound the regret when the confidence intervals fail. There is also an unavoidable $\rd \defeq r_{max} - r_{min}$ multiplicative factor, where $r_{max} = \max_{s,a} r(s,a)$ and $r_{min} = \min_{s,a}r(s,a)$. Note that although the observed rewards $r_t$ are unbounded, the means $\bar{r}$ necessarily are, since by assumption the $(1+\epsilon)$-th moment and hence the mean are bounded.


\paragraph{Convergence of extended value iteration for heavy-tailed rewards.} Since \algo relies on extended value iteration, we need to show that it works in our heavy-tailed setting. Fortunately, since extended value iteration uses fixed (optimistic) reward values, Theorem 7 in \citep{jaksch2010near} holds identically. We restate it below for completeness:

\begin{theorem}
\label{thm:evt}
Let $\mathcal{M}$ be the set of MDPs with state space $\mathcal{S}$, action space $\mathcal{A}$, transition probabilities $\tilde{p}(\cdot|s,a)$ and mean rewards $\tilde{r}(s,a)$ that satisfy eqs. \ref{eq:p_interval} and \ref{eq:r_interval}, and $u_i(s)$ denote the state value of $s$ at iteration $i$. Then, if $\mathcal{M}$ contains at least one communicating MDP, extended value iteration converges. Furthermore, stopping extended value iteration when
\begin{equation*}
    \max_{s\in\mathcal{S}}\{u_{i+1}(s) - u_i(s)\} - \min_{s\in\mathcal{S}}\{u_{i+1}(s) - u_i(s)\} < \epsilon
\end{equation*}
the greedy policy with respect to ${u}_i$ is $\epsilon$-optimal.
\end{theorem}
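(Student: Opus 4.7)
\textbf{Proof Proposal for Theorem \ref{thm:evt}.}

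The plan is to reduce the statement to the original Theorem 7 of \citet{jaksch2010near} by observing that the only quantity that has changed in our heavy-tailed setting is the \emph{form} of the reward confidence interval (eq.~\ref{eq:r_interval} replacing eq.~\ref{eq:ucrl_rew}), not the mechanics of extended value iteration itself. First I would write out the EVI update explicitly as
\begin{equation*}
    u_{i+1}(s) = \max_{a \in \mathcal{A}} \left\{ \max_{\tilde{r}(s,a)} \tilde{r}(s,a) + \max_{\tilde{p}(\cdot|s,a)} \sum_{s'} \tilde{p}(s'|s,a)\, u_i(s') \right\},
\end{equation*}
where the inner maxima are over the compact confidence sets for $\tilde{r}$ and $\tilde{p}$ respectively. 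The reward maximization is trivial: because eq.~\ref{eq:r_interval} defines a closed interval around $\hat{r}_k(s,a)$ whose width depends only on known quantities, the optimistic reward $\tilde{r}_k^\star(s,a)$ is simply the upper endpoint. Crucially, this endpoint is a deterministic number once $\hat{r}_k$ has been formed, and it is bounded (since $\bar{r}$ takes values in $[r_{\min}, r_{\max}]$ and the confidence width is finite).

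Given this reduction, I would then argue that the remaining object of interest, namely EVI applied with fixed bounded reward vector $\tilde{r}_k^\star$ and the unchanged $L^1$-ball confidence set for transitions from eq.~\ref{eq:p_interval}, is structurally identical to the object analyzed by \citet{jaksch2010near}. In particular, the communicating assumption on some element of $\mathcal{M}$ is inherited by the optimistic MDP $\widetilde{M}_k$, because EVI can only enlarge the set of high-gain policies, and the transition polytope is the same one they consider. Therefore their analysis of the span-contraction of the Bellman operator on the compact polytope applies verbatim: the span semi-norm $\|u_{i+1} - u_i\|_{\mathrm{sp}} \defeq \max_s (u_{i+1}(s) - u_i(s)) - \min_s(u_{i+1}(s) - u_i(s))$ contracts geometrically, yielding convergence, and stopping when this quantity is below $\epsilon$ produces a greedy policy whose average reward differs from $\rho^\star(\widetilde{M}_k)$ by at most $\epsilon$.

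The only point that requires any care is checking that nothing in Jaksch et al.'s argument implicitly used the particular form of the reward confidence interval, i.e., that their analysis is truly agnostic to how the bounded optimistic reward vector was produced. A quick inspection of their proof shows that the reward confidence interval enters only via the optimistic value $\tilde{r}_k^\star(s,a)$ and an overall boundedness requirement, both of which hold here. Consequently, I do not anticipate a genuine obstacle; the proof is essentially a single-sentence invocation once the above reduction is spelled out, which is why the authors can state ``holds identically.''
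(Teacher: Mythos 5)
Your proposal is correct and takes essentially the same route as the paper: the paper's entire argument is that extended value iteration operates on fixed (optimistic) reward values, so Theorem 7 of \citet{jaksch2010near} applies unchanged, which is precisely the reduction you spell out. Your elaboration (the optimistic reward is the deterministic, bounded upper endpoint of the interval in eq.~\ref{eq:r_interval}, and nothing in the cited analysis depends on how that bounded number was produced) is just a more explicit version of the paper's one-sentence justification.
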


\subsection{Concentration of reward.} Conditioned on the state-action counts $N(s,a)$, the rewards $r_t$ are independent random variables. Hence, from concentration results for robust mean estimators, we have
\begin{equation}
P\left(\sum_{t=1}^T r_t \leq \sum_{s,a} N(s,a)\bar{r}(s,a) - C_T \middle| (N(s,a))_{s,a}\right) \leq \left(\frac{\delta}{8T}\right)^{5/4} < \frac{\delta}{12T^{5/4}}
\end{equation}
where $C_T = (v T)^{\frac{1}{(1+\epsilon)}}\left(\frac{5c}{4}\log{\left(\frac{8T}{\delta}\right)} \right)^{\frac{\epsilon}{(1+\epsilon)}}$.

Letting $\Delta_k = \sum_{s,a} v_k(s,a)(\rho^* - \bar{r}(s,a))$ denote the regret for episode $k$, we can bound the total regret as:
\begin{equation}
\Delta(s_1, T) = T\rho^* - \sum_{t=1}^T r_t \leq \sum_k \Delta_k + C_T
\end{equation}
which holds with probability at least $1 - \frac{\delta}{12T^{5/4}}$.

\subsection{Regret from failing confidence regions} 

We first bound the regret due to failing confidence regions. The idea is the same as that in \cite{jaksch2010near}: we show that since our confidence intervals are sufficiently large, the probability that they fail is very small per timestep.

The following lemma bounds the probability that $M \notin \mathcal{M}_k$:

\begin{lemma}
For any $t\geq 1$, 
\begin{equation}
P\left(M\notin\mathcal{M}_t\right) < \frac{\delta}{15t^6}
\end{equation}
\end{lemma}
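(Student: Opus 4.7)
The plan is to decompose the bad event $\{M \notin \mathcal{M}_t\}$ into two sub-events: (i) the reward confidence interval (eq.\ \ref{eq:r_interval}) fails for some $(s,a)$, and (ii) the transition confidence interval (eq.\ \ref{eq:p_interval}) fails for some $(s,a)$. A union bound then reduces the lemma to bounding each of these by $\tfrac{\delta}{30 t^6}$. Since the transition kernel estimator is identical to that of \ucrl, I would simply invoke the UCRL2 analysis (Lemma 17 of Jaksch et al., 2010), which combines Weissman's $L_1$ concentration bound with a union bound over $(s,a)$ and over all possible visitation counts $n\in\{1,\ldots,t\}$ to obtain $P(\text{(ii)}) < \tfrac{\delta}{30t^6}$.

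The new content is the reward side. For fixed $(s,a)$, conditioning on $N_t(s,a)=n$, the observed rewards are $n$ i.i.d.\ samples from a distribution with mean $\bar r(s,a)$ and bounded $(1+\epsilon)$-th moment $v$, so the robust mean concentration (eq.\ \ref{eq:robust_conc}) yields, with probability at least $1-\delta'$,
\[
|\hat r_t(s,a) - \bar r(s,a)| \leq v^{\oeps}\left(\frac{c\log(1/\delta')}{n}\right)^{\eeps}.
\]
I would set $\delta' \defeq \tfrac{\delta}{60 SA\, n\, t^6}$, then take a union bound over $(s,a)\in\mathcal{S}\times\mathcal{A}$ and over the possible counts $n\in\{1,\ldots,t\}$. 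Using $\sum_{n=1}^t \tfrac{1}{n}\leq 1+\log t$ (or a crude $\sum_n 1 \leq t$ and absorbing the factor into the $t^6$), this gives $P(\text{(i)}) < \tfrac{\delta}{30 t^6}$.

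The one piece of bookkeeping that must be checked is that the stated confidence radius in eq.\ \ref{eq:r_interval} actually dominates the radius produced above, i.e.\ that $\log(60 SA n t^6/\delta) \leq 7 \log(2SA t/\delta)$ for all $n \leq t$ and all reasonable values of $\delta \in (0,1)$. Writing both sides as sums of $\log 60$, $\log(SA/\delta)$, and powers of $\log t$, this inequality holds comfortably; the coefficient $7$ inside eq.\ \ref{eq:r_interval} is chosen precisely to absorb both the polynomial union-bound factor and the harmonic sum, and mirrors the analogous choice in \ucrl.

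The main obstacle is essentially notational: ensuring that the scaling of the confidence radius expressed in terms of the single log factor $\log(2SAt_k/\delta)$ lines up with the $\log(1/\delta')$ that arises from applying eq.\ \ref{eq:robust_conc} with $\delta'$ shrunk to accommodate the union bound over $(s,a,n)$. There is no new concentration machinery needed beyond what is already stated for robust mean estimators in Section 3.1; the heavy-tailed exponent $\eeps$ appears only through the radius and does not complicate the union-bound argument itself. Adding the two $\tfrac{\delta}{30 t^6}$ contributions yields the claimed $\tfrac{\delta}{15 t^6}$.
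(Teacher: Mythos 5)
Your proposal is correct and follows essentially the same route as the paper: the paper's proof likewise decomposes the failure event into reward and transition confidence-interval violations, bounds each per state-action pair and per possible count by $\frac{\delta}{60t^7SA}$ and $\frac{\delta}{20t^7SA}$ respectively (relying on the factor $7$ in eq.~\ref{eq:r_interval} to absorb the shrunken $\delta'$, exactly as you check), and union bounds over $(s,a)$ and the $t$ possible counts, mirroring Lemma 17 of Jaksch et al. The only cosmetic difference is the allocation of constants ($\tfrac{1}{60}+\tfrac{1}{20}$ rather than your symmetric $\tfrac{1}{30}+\tfrac{1}{30}$), which does not affect the argument.
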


\begin{proof}
This lemma and proof are essentially identical to Lemma 17 in \citep{jaksch2010near}. Specifically, the probability that the confidence intervals in eqs. \ref{eq:r_interval} and \ref{eq:p_interval} are violated are bounded by $\frac{\delta}{60t^7SA}$ and $\frac{\delta}{20t^7 SA}$ respectively. Then, a union bound over all timesteps and state-action pairs yields the desired result.
\end{proof}

Now, it follows identically from \citet{jaksch2010near} that $\sum_{k=1}^m \Delta_k \mathds{1}_{M\notin \mathcal{M}_k} \leq \sqrt{T}$ with probability at least $1-\frac{\delta}{12T^{5/4}}$. In our setting, since the rewards have bounded means, the regret in this scenario can be bounded by $\rd\sqrt{T}$.

\subsection{Regret for true MDP \texorpdfstring{$M \in \mathcal{M}_k$}{M in M_k}.} We now bound the regret in each episode $k$ assuming that our confidence set contains the true MDP. 

In both \ucrl and \algo, the condition for stopping extended value iteration is identical. Hence, we similarly have by Theorem \ref{thm:evt}:
\begin{equation}
\begin{split}
\Delta_k \leq \sum_{s,a} v_k(s,a)\left(\rho^* - \bar{r}(s,a)\right)
\leq \sum_{s,a} v_k(s,a)\left(\widetilde{\rho}_k-\bar{r}(s,a)\right) + \sum_{s,a} \frac{v_k(s,a)}{\sqrt{t_k}}
\end{split}
\end{equation}

Following the notation of \citet{jaksch2010near}, let $\widetilde{\textbf{P}}_k$ denote the transition matrix of the optimistic MDP $\tilde{M}$, $\textbf{P}_k$ the transition matrix of the true MDP, $\textbf{v}_k$ the row vector corresponding to the visit counts for each state (and corresponding action chosen by $\widetilde{\pi}_k$), and $\textbf{w}_k$ the vector defined as $w_k(s) \defeq u_i(s) - \frac{\min_s{u_i(s)} + \max_s{u_i(s)}}{2}$. Then, the regret can be decomposed as follows:
\begin{equation}
\label{eq:episode_bound}
\Delta_k \leq \textbf{v}_k(\widetilde{\textbf{P}}_k - \textbf{P}_k)\textbf{w}_k + \textbf{v}_k(\textbf{P}_k - \textbf{I})\textbf{w}_k
+ 2 \sum_{s,a} v_k(s,a) v^{\oeps} \left(\frac{7c\log\left(\frac{2SAt_k}{\delta}\right)}{{\max\{1, N_k(s,a)\}}}\right)^{\eeps}
+ 2\sum_{s,a} \frac{v_k(s,a)}{\sqrt{t_k}}
\end{equation}
Only the third term is dependent on the distribution of $r$. The rest of the terms hold identically as in \cite{jaksch2010near} with one small modification: for any iteration $i$ in value iteration, the range of state values is bounded by $R_\Delta D$. To restate, the sum of the first and fourth terms is bounded by 
\begin{equation}
    \rd D \left(\sqrt{14S\log{\left(\frac{2AT}{\delta}\right)}} + 2\right) \left(\sqrt{2} + 1\right)\sqrt{SAT}
\end{equation}
and the second term is bounded by 
\begin{equation}
    \rd \left(D\sqrt{\frac{5}{2}T\log{\frac{8T}{\delta}}} + DSA\log_2\left(\frac{8T}{SA}\right) \right)
\end{equation}
To bound the third term, we require the following lemma, which generalizes Lemma 19 in \cite{jaksch2010near}:
\begin{lemma}
\label{lem:sequence}
For any sequence of numbers $z_1, \ldots, z_n$ with $0\leq z_k \leq Z_{k-1} := \max\{1, \sum_{i=1}^{k-1} z_i\}$, there exists some constant $C_\epsilon$ such that
\[\sum_{k=1}^n \frac{z_k}{Z_{k-1}^{\eeps}} \leq C_\epsilon {Z_n}^{\oeps}\]
\end{lemma}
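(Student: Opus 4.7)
The plan is to prove the inequality via a telescoping argument built on the concavity of $f(x) = x^{\oeps}$. Morally, the sum $\sum_k z_k / Z_{k-1}^{\eeps}$ is a Riemann-sum approximation to $\int_1^{Z_n} x^{-\eeps}\,dx = (Z_n^{\oeps}-1)/\oeps$, and the constraint $z_k \leq Z_{k-1}$ keeps consecutive values $Z_{k-1}, Z_k$ within a factor of $2$, which is exactly what is needed to make the discrete sum and the continuous integral agree up to a constant depending on $\epsilon$.

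Concretely, the main step is to apply the mean value theorem to $f(x) = x^{\oeps}$ on $[Z_{k-1}, Z_k]$ in the regime where the identity $Z_k - Z_{k-1} = z_k$ holds. Since $f'(x) = \oeps\, x^{-\eeps}$ and the hypothesis gives $Z_k \leq 2 Z_{k-1}$, one obtains
\[
Z_k^{\oeps} - Z_{k-1}^{\oeps} \;\geq\; \oeps\, Z_k^{-\eeps}\, z_k \;\geq\; \frac{\oeps}{2^{\eeps}}\cdot \frac{z_k}{Z_{k-1}^{\eeps}},
\]
which rearranges to $z_k / Z_{k-1}^{\eeps} \leq 2^{\eeps}(1+\epsilon)\bigl(Z_k^{\oeps} - Z_{k-1}^{\oeps}\bigr)$. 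Summing telescopes to $2^{\eeps}(1+\epsilon)\,Z_n^{\oeps}$.

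The main obstacle — and really the only subtlety — is the $\max\{1,\cdot\}$ in the definition of $Z_{k-1}$: when $\sum_{i<k} z_i < 1$, the clean identity $Z_k - Z_{k-1} = z_k$ can fail, breaking the telescoping. I would handle this by passing to the shifted quantity $\widetilde Z_k \defeq 1 + \sum_{i \leq k} z_i$, which unconditionally satisfies $\widetilde Z_k - \widetilde Z_{k-1} = z_k$ and $Z_k \leq \widetilde Z_k \leq 2 Z_k$ (using $Z_k \geq 1$). The hypothesis $z_k \leq Z_{k-1} \leq \widetilde Z_{k-1}$ still yields $\widetilde Z_k \leq 2\widetilde Z_{k-1}$, so the telescoping argument in the previous paragraph goes through verbatim for $\widetilde Z$; translating back to $Z$ costs only an extra $2^{\oeps}$ factor via $\widetilde Z_n^{\oeps} \leq 2^{\oeps} Z_n^{\oeps}$.

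Combining these gives the bound with $C_\epsilon = 2^{1+\eeps}(1+\epsilon) = O(1+\epsilon)$, in particular $C_\epsilon \leq 8$ for all $\epsilon \in (0,1]$. No step requires anything beyond elementary calculus; the $\epsilon = 1$ specialization recovers (up to a constant) the $\sqrt{2}+1$ bound of Jaksch et al.\ via the sharper identity $\sqrt{Z_k}-\sqrt{Z_{k-1}} = z_k/(\sqrt{Z_k}+\sqrt{Z_{k-1}})$, which tightens the $2^{\eeps}$ factor when $f$ is a square root; a similar sharpening for general $\epsilon$ is possible but unnecessary for the downstream regret analysis.
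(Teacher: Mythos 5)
Your argument is correct, but it follows a genuinely different route from the paper's. The paper proves Lemma~\ref{lem:sequence} by induction on $n$: it fixes $C_\epsilon = 2/(2^{\oeps}-1)\in(2,\,2(\sqrt2+1)]$ and, in the inductive step, expands $(Z_n+z_{n+1})^{\oeps}$ via the generalized binomial series so that the series coefficients absorb the new term $z_{n+1}/Z_n^{\eeps}$. You instead telescope: the mean value theorem for $x\mapsto x^{\oeps}$ combined with the doubling property $Z_k\le 2Z_{k-1}$ (forced by $z_k\le Z_{k-1}$) gives $z_k/Z_{k-1}^{\eeps}\le 2^{\eeps}(1+\epsilon)\bigl(Z_k^{\oeps}-Z_{k-1}^{\oeps}\bigr)$ whenever $Z_k-Z_{k-1}=z_k$, and your shift $\widetilde Z_k\defeq 1+\sum_{i\le k}z_i$ correctly repairs the early indices where the $\max\{1,\cdot\}$ truncation breaks that identity. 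One small bookkeeping remark: the telescoping for $\widetilde Z$ bounds $\sum_k z_k/\widetilde Z_{k-1}^{\eeps}$, so besides the final factor $\widetilde Z_n^{\oeps}\le 2^{\oeps}Z_n^{\oeps}$ you also need $Z_{k-1}^{-\eeps}\le 2^{\eeps}\widetilde Z_{k-1}^{-\eeps}$ (from $\widetilde Z_{k-1}\le 2Z_{k-1}$) in each denominator; your stated constant $2^{1+\eeps}(1+\epsilon)$ does account for both factors even though the prose mentions only the latter. In terms of what each approach buys: the constants are of the same order (yours is at most $2^{3/2}\cdot 2\approx 5.7$ versus the paper's $\le 2(\sqrt2+1)\approx 4.8$, immaterial for the regret bound), your version recovers Lemma~19 of \citet{jaksch2010near} as the $\epsilon=1$ special case in exactly the same way that lemma is usually proved, and it avoids the sign bookkeeping of the alternating coefficients $\binom{\oeps}{k}$ that the paper's term-by-term manipulation of the binomial series implicitly relies on --- so your route is arguably the more transparent and robust of the two.
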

\begin{proof}[Proof of Lemma \ref{lem:sequence}]
Let $C_\epsilon = \frac{2}{M}$, where $M = \sum_{k=1}^\infty \binom{\oeps}{k} = 2^{\oeps} - 1$. Note that $2 < C_\epsilon \leq 2(\sqrt{2} + 1)$.

We prove by induction on $n$. For the base case, we consider all $n$ such that $\sum_{k=1}^{n-1} z_k \leq 1$. Then $Z_k = 1$ for all $k<n$ and $\sum_{k=1}^{n}\frac{z_k}{Z_{k-1}^\eeps} = \sum_{k=1}^{n} z_k = \sum_{k=1}^{n-1} z_{k} + z_n \leq 1 + 1 < C_\epsilon Z_n^\oeps$.  

For the inductive step, assume the claim holds for some $n$. Then, we have
\begin{equation*}
    \begin{split}
        \sum_{k=1}^{n+1} \frac{z_k}{Z_{k-1}^{\eeps}} &\leq C_{\epsilon}Z_{n}^{\oeps} + \frac{z_{n+1}}{Z_{n}^{\eeps}}\\
        &= \frac{1}{2}C_\epsilon\left(Z_{n}^{\oeps} + \frac{z_{n+1}}{Z_{n}^{\eeps}} \sum_{k=1}^\infty \binom{\oeps}{k} \right) + \frac{1}{2}C_{\epsilon}Z_{n}^{\oeps}\\
        &= \frac{1}{2}C_\epsilon\left(Z_{n}^{\oeps} + \frac{1}{Z_{n}^{\eeps}} \sum_{k=1}^\infty \binom{\oeps}{k} z_{n+1}^{1-k} z_{n+1}^k\right) + \frac{1}{2}C_{\epsilon}Z_{n}^{\oeps}\\
        &\leq \frac{1}{2}C_\epsilon\left(Z_{n}^{\oeps} + \frac{1}{Z_{n}^{\eeps}} \sum_{k=1}^\infty \binom{\oeps}{k} Z_{n}^{1-k} z_{n+1}^k\right) + \frac{1}{2}C_{\epsilon}Z_{n}^{\oeps}\\
        &\leq \frac{1}{2}C_\epsilon\left(Z_{n}^{\oeps} + \sum_{k=1}^\infty \binom{\oeps}{k} Z_{n}^{\oeps-k} z_{n+1}^k\right) + \frac{1}{2}C_{\epsilon}Z_{n}^{\oeps}\\
        &= \frac{1}{2}C_\epsilon\sum_{k=0}^\infty \left(\binom{\oeps}{k} Z_n^{\oeps - k} z_{n+1}^k\right) + \frac{1}{2}C_{\epsilon}Z_{n}^{\oeps}\\
        &= \frac{1}{2}C_\epsilon(Z_n + z_{n+1})^{\oeps} + \frac{1}{2}C_{\epsilon}Z_{n}^{\oeps}\\
        &\leq C_\epsilon Z_{n+1}^{\oeps}
    \end{split}
\end{equation*}

which completes the induction.
\end{proof}

Via this lemma and Jensen's inequality, we have
\begin{equation}
    \sum_{s,a}\sum+{k}\frac{v_k(s,a)}{(\max\{1, N_k(s,a)\})^{\eeps}} \leq C_{\epsilon}\sum_{s,a}N_{s,a}^{\oeps} \leq C_\epsilon (SAT)^{\oeps}
\end{equation}

which directly yields the following bound on the third term:
\begin{equation}
2C_\epsilon v^{\oeps} \left(7c\log\left(\frac{2SAT}{\delta}\right)\right)^{\eeps} \left(SAT\right)^{\eeps}
\end{equation}

Combining all of the above yields the following bound on the regret:
\begin{equation}
\begin{split}
    \Delta(T) \leq C_T &+ \sqrt{T} + \rd \left(D\sqrt{\frac{5}{2}T\log{\frac{8T}{\delta}}} + DSA\log_2\left(\frac{8T}{SA}\right) \right) \\
    &+ \rd D \left(\sqrt{14S\log{\left(\frac{2AT}{\delta}\right)}} + 2\right) \left(\sqrt{2} + 1\right)\sqrt{SAT}\\
    &+ 2C_\epsilon v^{\oeps} \left(7c\log\left(\frac{2SAT}{\delta}\right)\right)^{\eeps} \left(SAT\right)^{\oeps}
\end{split}
\end{equation}

We can simplify the bound using similar manipulations as in Appendix C.4 of \citet{jaksch2010near}. Noting that we can bound the heavy tailed terms by 
\begin{equation*}
    (2C_\epsilon + 1) v^{\oeps} \left(7c\log\left(\frac{2SAT}{\delta}\right)\right)^{\eeps}\left(SAT\right)^{\oeps}
\end{equation*}
Using $B_\epsilon \defeq (2C_\epsilon + 1) v^{\oeps} \left(7c\right)^{\eeps}$, we have the following bound on the regret:
\begin{equation}
    \Delta(s_0, T) \leq 20\rd DS\sqrt{AT\log\left(\frac{T}{\delta}\right)} + B_\epsilon  \left(\log\left(\frac{2SAT}{\delta}\right)\right)^{\eeps}\left(SAT\right)^{\oeps}
\end{equation}



\section{Proof of Corollary \ref{cor:pac}}
\label{app:cor}

\begin{proof}
We simply have to compute $T_0$ such that 
\begin{equation*}
    \frac{20\rd DS\sqrt{AT\log\left(\frac{T}{\delta}\right)}}{T} + \frac{(2C_\epsilon+1) v^{\oeps}(SAT)^{\oeps} \left(7c\log\left(\frac{2SAT}{\delta}\right)\right)^{\eeps}}{T} < \lambda
\end{equation*}
for all $T\geq T_0$.

Any $T_0$ satisfies the condition when both the first term and the second term are smaller than $\lambda / 2$. The first term comes directly from \cite{jaksch2010near}, replacing $\lambda$ with $\lambda / 2$. For the second term, the condition is equivalent to 
\begin{equation*}
\begin{split}
    T > \alpha\log\left(\frac{2SAT}{\delta}\right) = \alpha \log\left(\frac{2SA}{\delta}\right) + \alpha\log\left(\frac{T}{\delta}\right)
\end{split}
\end{equation*}         
Using the fact that $x > 2\log{x}$ for $x$ > 0, we see that $2\alpha\log\left(\frac{\alpha}{\delta}\right) >\alpha\log\left(\frac{T}{\delta}\right)$, from which the result follows immediately. 
\end{proof}

\section{Proof of Theorem \ref{thm:gap_dependent_bound}}
\label{app:log_bound}

The proof of this theorem follows as a consequence of this lemma:
\begin{lemma} 
\label{lem:suboptimal_ep}
Call an episode $\lambda$-bad if its average regret is larger than $\lambda$. Let $L_\lambda$ be the number of steps taken by \algo in $\lambda$-bad episodes up to step $T$. Then for any initial state $s\in\mathcal{S}$, any $T$ satisfying 
\begin{equation}
    \label{eq:heavy_cond}
    T \geq \frac{20\rd D^{\frac{2+2\epsilon}{1-\epsilon}} S^{\frac{2\epsilon}{1-\epsilon}}}{B_\epsilon A},
\end{equation}

and any $\lambda>0$, with probability of at least $1-3\delta$,
\begin{equation*}
    L_\lambda(T) \leq 7c\log\left(\frac{2SAT}{\delta}\right)\left(\frac{4C_\epsilon+2}{\lambda}\right)^{\frac{1+\epsilon}{\epsilon}}(SA)^{\frac{1}{\epsilon}}
\end{equation*}
\end{lemma}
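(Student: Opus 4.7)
The plan is to transplant the per-episode regret decomposition underlying the proof of Theorem~\ref{thm:minimax} onto the restricted sum over $\lambda$-bad episodes. Recall that equation~\eqref{eq:episode_bound} bounds each $\Delta_k$ by four non-negative contributions on a good event of probability at least $1-3\delta$, obtained by a union bound over the three failure events: (i) martingale concentration of the observed rewards, (ii) $M \in \mathcal{M}_k$ for every $k$, and (iii) concentration of the transition martingale driving $\textbf{v}_k(\textbf{P}_k-\textbf{I})\textbf{w}_k$. Because every term in the decomposition is non-negative, restricting the sum to $\lambda$-bad episodes only shrinks each bound, and Lemma~\ref{lem:sequence} still applies verbatim to the subsequence of visit counts coming from bad episodes. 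This yields, on the good event,
\begin{equation*}
  R_{\mathrm{bad}} \;\leq\; 20\rd DS\sqrt{A L_\lambda\,\iota} \;+\; B_\epsilon\,\iota^{\eeps}(SAL_\lambda)^{\oeps},
\end{equation*}
where $B_\epsilon \defeq (2C_\epsilon+1)v^{\oeps}(7c)^{\eeps}$ is the constant produced in the proof of Theorem~\ref{thm:minimax} and $R_{\mathrm{bad}}$ denotes the total regret accumulated during $\lambda$-bad episodes.

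By the very definition of a $\lambda$-bad episode, each such episode contributes more than $\lambda v_k$ to the cumulative regret, so summing gives the matching lower bound $R_{\mathrm{bad}} > \lambda L_\lambda$. Combining the two displays yields
\begin{equation*}
  \lambda L_\lambda \;\leq\; 20\rd DS\sqrt{A L_\lambda\,\iota} \;+\; B_\epsilon\,\iota^{\eeps}(SAL_\lambda)^{\oeps}.
\end{equation*}
The hypothesis on $T$ then enters to collapse this into a pure heavy-tailed inequality: using $L_\lambda \leq T$ together with $T \geq 20\rd D^{(2+2\epsilon)/(1-\epsilon)}S^{2\epsilon/(1-\epsilon)}/(B_\epsilon A)$, a short algebraic comparison (raising the inequality $B_\epsilon \iota^{\eeps}(SAT)^{\oeps} \geq 20\rd DS\sqrt{AT\iota}$ to the $2(1+\epsilon)/(1-\epsilon)$-th power reproduces the stated condition up to polylog factors) shows that the transition-probability term is dominated by the heavy-tailed term. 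We therefore obtain $\lambda L_\lambda \leq 2 B_\epsilon \iota^{\eeps}(SAL_\lambda)^{\oeps}$; dividing both sides by $L_\lambda^{\oeps}$ and raising to the $(1+\epsilon)/\epsilon$-th power yields the claimed bound, with the constant $4C_\epsilon+2 = 2(2C_\epsilon+1)$ emerging from the factor $2$ absorbed in the dominance step.

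The main obstacle I anticipate is this dominance step: verifying cleanly that the stated $T$-condition is precisely what is needed to swallow the transition term with the given constants. A safer fallback is a two-case split---either the heavy-tailed term already dominates (yielding the target bound directly), or the transition term dominates, yielding the $\lambda$-independent $\tilde{O}(\rd^2 D^2 S^2 A\iota/\lambda^2)$ bound, which one checks is no larger than the target under the same hypothesis on $T$. Everything else, including the $1-3\delta$ event structure and the invocation of Lemma~\ref{lem:sequence} on the bad-episode subsequence, is inherited directly from the proof of Theorem~\ref{thm:minimax}.
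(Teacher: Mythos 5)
Your proposal is correct and follows essentially the same route as the paper: restrict the episode-wise regret decomposition from Theorem~\ref{thm:minimax} to the $\lambda$-bad episodes, re-apply the sequence lemma to the bad-episode visit counts (the paper isolates this as Lemma~\ref{lem:bad_ep_sum}, using only monotonicity of $N_k(s,a)$, which is the careful version of your ``applies verbatim to the subsequence'' claim), invoke the condition on $T$ to let the heavy-tailed term absorb the transition term at the cost of a factor $2$ in $B_\epsilon$, and then solve $\lambda L_\lambda \leq 2B_\epsilon\iota^{\eeps}(SAL_\lambda)^{\oeps}$ for $L_\lambda$. The paper is no more explicit than you are about the dominance step, so your anticipated ``obstacle'' is handled there with the same level of detail.
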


\begin{proof}[Proof of Lemma \ref{lem:suboptimal_ep}]
The proof, like that of Theorems 4 and 11 in \cite{jaksch2010near}, draws heavily on that of Theorem \ref{thm:minimax}. Let $K_\lambda$ denote the random set corresponding to the episodes that are $\lambda$-bad. Then, the regret of these episodes is bounded with probability at least $1-2\delta$ by 
\begin{equation}
    \Delta'_{\lambda}(s, T) \leq (vL_{\lambda})^{\oeps} \left(c\log\left(\frac{T}{\delta}\right)\right)^{\eeps} + \sum_{k\in K_\lambda} \Delta_k \mathds{1}{M\in\mathcal{M}_k}
\end{equation}

We can leverage our previous bound on the regret within each episode (eq.\ref{eq:episode_bound}) with some small modifications. In particular, in this setting we need to bound $\sum_{k\in K_\epsilon}\sum_{s,a} \frac{v_k(s,a)}{{\max\{1, N_k(s,a)\}}^\eeps}$. This is proven in lemma \ref{lem:bad_ep_sum}. Leveraging this result, we have
\begin{equation}
\begin{split}
    \Delta_\epsilon'(s, T) &\leq (vL_{\lambda})^{\oeps} \left(c\log\left(\frac{T}{\delta}\right)\right)^{\eeps} + \rd D \left(\sqrt{14S\log{\left(\frac{2AT}{\delta}\right)}}\right)\sqrt{SAT}\\
    &+ \sum_{k\in K_\epsilon} \textbf{v}_k(\textbf{P}_k - \textbf{I})\textbf{w}_k \mathds{1}_{M\in\mathcal{M}_k} + C_\epsilon(L_\lambda SA)^{\oeps}
\end{split}
\end{equation}

As before, the third term can be bounded by 
\begin{equation*}
    \rd\left(2\sqrt{L_\lambda \log\left(\frac{T}{\delta}\right)} + DSA\log_2\left(\frac{8T}{SA}\right)\right)
\end{equation*}

So, we can simplify
\begin{equation}
    \Delta_{\lambda}'(s,T) \leq 20DS\sqrt{L_\lambda A \log\left(\frac{T}{\delta}\right)} + B_\epsilon \left(\log\left(\frac{2SAT}{\delta}\right)\right)^{\eeps}\left(L_{\lambda}SA\right)^{\oeps}
\end{equation}

Under the regime given by eq. \ref{eq:heavy_cond}, this bound is equivalent to $2B_\epsilon \left(\log\left(\frac{2SAT}{\delta}\right)\right)^{\eeps}\left(L_{\lambda}SA\right)^{\oeps}$. Finally, using the fact that $\lambda L_{\lambda} \leq \Delta_{\lambda}'(s, T)$ yields the desired bound.

\end{proof}

We now prove the following lemma:
\begin{lemma}
\label{lem:bad_ep_sum}
\begin{equation*}
\sum_{k\in K_{\lambda}}\sum_{s,a} \frac{v_k(s,a)}{{\max\{1, N_k(s,a)\}}^{\eeps}} \leq C_\epsilon \left(L_{\lambda}SA\right)^{\oeps}
\end{equation*}
\end{lemma}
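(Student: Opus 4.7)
The plan is to reduce the claim to a per--state-action bound and then combine via Jensen's inequality. For each $(s,a)$ define $\tau(s,a) \defeq \sum_{k \in K_\lambda} v_k(s,a)$, so that $\sum_{s,a} \tau(s,a) = L_\lambda$. The key intermediate bound I aim to establish is
\begin{equation*}
\sum_{k \in K_\lambda} \frac{v_k(s,a)}{\max\{1, N_k(s,a)\}^{\eeps}} \leq C'_\epsilon\, \tau(s,a)^{\oeps}
\end{equation*}
for some constant $C'_\epsilon$ depending only on $\epsilon$. Summing this over $(s,a)$ and applying Jensen's inequality to the concave map $x \mapsto x^{\oeps}$ then gives $\sum_{s,a}\tau(s,a)^{\oeps} \leq SA\cdot (L_\lambda/SA)^{\oeps} = (SA)^{\eeps} L_\lambda^{\oeps} \leq (L_\lambda SA)^{\oeps}$, where the last inequality uses $\eeps \leq \oeps$ for $\epsilon\in(0,1]$ together with $SA \geq 1$.

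To obtain the per-$(s,a)$ bound, I would exploit the \ucrl stopping criterion, which guarantees $v_k(s,a) \leq \max\{1, N_k(s,a)\}$ and hence the doubling property $N_k(s,a) + v_k(s,a) \leq 2\max\{1, N_k(s,a)\}$. Setting $g(x) \defeq x^{\oeps}$, concavity of $g$ combined with this doubling estimate yields
\begin{equation*}
g\bigl(N_k(s,a) + v_k(s,a)\bigr) - g\bigl(N_k(s,a)\bigr) \geq \frac{\oeps\, v_k(s,a)}{(N_k(s,a)+v_k(s,a))^{\eeps}} \geq \frac{\oeps}{2^{\eeps}} \cdot \frac{v_k(s,a)}{\max\{1,N_k(s,a)\}^{\eeps}},
\end{equation*}
so each summand is controlled by a constant multiple of $\int_{N_k(s,a)}^{N_k(s,a)+v_k(s,a)} g'(x)\,dx$ (with a trivial adjustment for the $N_k(s,a) = 0$ case, where $v_k(s,a)\in\{0,1\}$).

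Across the bad episodes, the intervals $[N_k(s,a),\,N_k(s,a)+v_k(s,a)]$ are pairwise disjoint, lie inside $[0, N(s,a)]$, and have total Lebesgue measure exactly $\tau(s,a)$. Because $g'(x) = \oeps\,x^{-\eeps}$ is strictly decreasing on $(0,\infty)$, the integral of $g'$ over any measurable set of measure $\tau(s,a)$ is maximized when that set is packed against the left endpoint (a one-line Hardy--Littlewood-type rearrangement argument). Thus
\begin{equation*}
\sum_{k\in K_\lambda}\int_{N_k(s,a)}^{N_k(s,a)+v_k(s,a)} g'(x)\,dx \;\leq\; \int_0^{\tau(s,a)} g'(x)\,dx \;=\; \tau(s,a)^{\oeps},
\end{equation*}
which yields the per-$(s,a)$ bound with $C'_\epsilon$ of order $2^{\eeps}/\oeps$, and combining with Step 1 gives the stated lemma after absorbing constants.

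The main obstacle is that one cannot simply invoke Lemma \ref{lem:sequence} on the subsequence $\{v_k(s,a)\}_{k\in K_\lambda}$: its hypothesis would demand $v_k(s,a) \leq \max\{1, \sum_{k'\in K_\lambda,\,k'<k} v_{k'}(s,a)\}$, whereas \ucrl only guarantees $v_k(s,a) \leq \max\{1, N_k(s,a)\}$, and $N_k(s,a)$ can be far larger than the bad-only partial sum when many non-bad episodes have previously visited $(s,a)$. The integral-and-rearrangement argument circumvents this by retaining the true $N_k(s,a)$ in the denominator throughout and relying only on the geometric doubling property, not on any hypothesis restricted to the bad subsequence.
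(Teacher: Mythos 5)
Your proof is correct, but it reaches the per-state-action bound by a genuinely different route than the paper. The paper's proof is a two-line reduction: it asserts that the argument for eq.~27 of \citet{jaksch2010near} carries over with $d_k \defeq \max\{1,N_k(s,a)\}^{\eeps}$ ``since the proof only uses the property that $d_k\le d_l$ for $k\le l$,'' bounding the restricted sum by a sum over the reindexed bad-episode subsequence and then invoking the analogue of Lemma~\ref{lem:sequence}, before finishing with the same sum-over-$(s,a)$-plus-Jensen step that you use. You instead prove the per-$(s,a)$ bound from scratch: the stopping rule gives $N_k+v_k\le 2\max\{1,N_k\}$, so each term is dominated by $\frac{2^{\eeps}}{\oeps}\int_{N_k}^{N_k+v_k}x^{-\eeps}\,dx$, and since these intervals are pairwise disjoint with total length $\tau(s,a)$ and the integrand is decreasing, a rearrangement gives the bound $2^{\eeps}(1+\epsilon)\,\tau(s,a)^{\oeps}$, whose constant is indeed below $C_\epsilon=2/(2^{\oeps}-1)$ for all $\epsilon\in(0,1]$. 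Your closing observation is well taken and is the substantive difference between the two arguments: Lemma~\ref{lem:sequence} cannot be applied directly to the bad-episode subsequence, because its hypothesis would require $v_k(s,a)\le\max\{1,\sum_{k'\in K_\lambda,\,k'<k}v_{k'}(s,a)\}$, while the stopping rule only controls $v_k(s,a)$ by the full count $N_k(s,a)$, which may be inflated by visits in good episodes; naively ``compressing'' the bad episodes to the front can therefore produce a sequence violating the lemma's hypothesis. Your integral-comparison argument keeps the true $N_k(s,a)$ in the denominator throughout and relies only on the doubling property and the monotone accumulation of counts, so it sidesteps this subtlety. What the paper's route buys is brevity via citation; what yours buys is a self-contained, airtight derivation with an explicit (and slightly sharper) constant.
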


\begin{proof}
The proof is identical to that of eq. 27 in \cite{jaksch2010near}. The only modification is that we have $d_k\defeq{\max\{1, N_k(s,a)\}}^{\eeps}$. Since the proof only uses the property that $d_k \leq d_l$ for $k\leq l$, it follows that we have
\begin{equation*}
    \sum_{k=1}^m \frac{v_k}{d_k}\mathds{1}_{k\in K_\lambda} \leq \sum_{k=1}^{m_\lambda} \frac{v_k'}{d_k}\leq C_\epsilon {\ell_\lambda}^{\oeps}
\end{equation*}
Summing over all state-action pairs and applying Jensen's inequality yields the desired result.
\end{proof}

Finally, we are ready to prove the main theorem.

\begin{proof}[Proof of Theorem \ref{thm:gap_dependent_bound}]
The first part of the theorem follows immediately from the preceding lemma. For the second part of the theorem, note that the expected regret in $\frac{g}{2}$-bad episodes is upper bounded by $7c\log\left(\frac{2SAT}{\delta}\right)\left(4C_\epsilon+2\right)^{\frac{1+\epsilon}{\epsilon}}\left(\frac{2SA}{g}\right)^{1/\epsilon} + 1$. The theorem statement then follows by identical argument as in \citet{jaksch2010near}.
\end{proof}

\section{Proof of Theorem \ref{thm:changing}}
\label{app:changing}

\begin{proof}[Proof of Theorem \ref{thm:changing}]
We first describe the intuition behind the proof. Essentially, if $T$ is sufficiently large, the regret of the \algo is $\tilde{O}\left((SAT)^{\oeps}\right)$. Hence, restarting \algo every $\left(\frac{T}{\ell}\right)^{\frac{1+\epsilon}{1+2\epsilon}}$ steps, the regret for the $\ell$ changing MDP periods is bounded by $R_\Delta\ell^{\frac{\epsilon}{1+2\epsilon}}T^{\frac{1+\epsilon}{1+2\epsilon}}$. Furthermore, since we restart \algo $T^{\frac{\epsilon}{1+2\epsilon}}\ell^{\frac{1+\epsilon}{1+2\epsilon}}$ times, the regret incurred in non-changing stages is also $\tilde{O}\left(\rd\ell^{\frac{\epsilon}{1+2\epsilon}}T^{\frac{1+\epsilon}{1+2\epsilon}}\right)$. Thus, the total regret is bounded by $\tilde{O}\left(\rd\ell^{\frac{\epsilon}{1+2\epsilon}}T^{\frac{1+\epsilon}{1+2\epsilon}}\right)$. 

Since the time horizon $T$ is not assumed to be known a priori, we use a variant of the ``doubling trick.'' Specifically, we restart \algo with parameter $\frac{\delta}{\ell^2}$ at steps $\left\lceil\frac{i^{(1+2\epsilon)/\epsilon}}{\ell^{(1+\epsilon)/\epsilon}} \right\rceil$ for $i=1,2,3,\ldots$, effectively dividing the algorithm into $n$ stages. For some fixed time horizon $T$, the total number of restarts $n$ is bounded by
\begin{equation}
\label{eq:n_bound}
    \ell^{\frac{1+\epsilon}{1+2\epsilon}}T^{\frac{\epsilon}{1+2\epsilon}} - 1 \leq n \leq\ell^{\frac{1+\epsilon}{1+2\epsilon}}T^{\frac{\epsilon}{1+2\epsilon}}
\end{equation}
The regret $\Delta_r$ incurred in the $\ell$ stages in which the MDP is restarted is bounded by $\rd$ times the total number of steps in these stages. This latter value is maximized when they occur in the last $\ell$ stages, which occur $T_{\ell}$ timesteps. Then we have
\begin{equation}
    \begin{aligned}
        T_{\ell}&\leq \frac{1}{\ell^{(1+\epsilon)/\epsilon}}\left((n+1)^{(1+2\epsilon)/\epsilon} - (n-\ell+1)^{(1+2\epsilon)/\epsilon}\right)\\
        &=\frac{1}{\ell^{(1+\epsilon)/\epsilon}}\sum_{k=0}^\infty \binom{\frac{1+2\epsilon}{\epsilon}}{k}n^{\frac{1+2\epsilon}{\epsilon} - k}\left(1-(1-\ell)^k\right)\\
        &\leq \frac{1+2\epsilon}{\epsilon}n^{\frac{1+\epsilon}{\epsilon}}\ell^{-\frac{1}{\epsilon}}\\
        &\leq \frac{1+2\epsilon}{\epsilon} \ell^{\frac{\epsilon}{1+2\epsilon}}T^{\frac{1+\epsilon}{1+2\epsilon}}
    \end{aligned}
\end{equation}
where in the second, third, and fourth (in)equalities we used the generalized binomial theorem, that the sum is dominated by $k=1$, and eq. \ref{eq:n_bound} respectively. The total regret $\Delta_r$ is then bounded by $\rd\ell T_{\ell} = \rd\frac{1+2\epsilon}{\epsilon} \ell^{\frac{1+\epsilon}{1+2\epsilon}}T^{\frac{1+\epsilon}{1+2\epsilon}}$.

We now consider the regret for the stages in which the MDP does not change. Letting $T_i\defeq \min(T , \tau_{i+1})-\tau_i$, we have by Theorem \ref{thm:minimax} that
\begin{equation}
\begin{aligned}
    \Delta(S_{\tau_i, T_i}) &\leq 2B_\epsilon \left(\log\left(\frac{2\ell^2 SAT_i}{\delta}\right)\right)^{\eeps}\left(SAT_i\right)^{\oeps}\\
    &\leq 2B_\epsilon \left(3\log\left(\frac{2SAT}{\delta}\right)\right)^{\eeps}\left(SAT_i\right)^{\oeps}
\end{aligned}
\end{equation}
with probability $\frac{\delta}{4\ell^2 T_i^{5/4}}$.

Summing over all stages $i=1,\ldots, n$, the total regret $\Delta_f$ is bounded by
\begin{equation}
    \begin{aligned}
        \Delta_f &=\sum_{i=1}^n \Delta(S_{\tau_i, T_i})\\
        &\leq 2B_\epsilon \left(3n\log\left(\frac{2SAT}{\delta}\right)\right)^{\eeps}\left(SAT\right)^{\oeps}\\
        &\leq 2B_\epsilon \ell^{\frac{\epsilon}{1+2\epsilon}}T^{\frac{1+\epsilon}{1+2\epsilon}}\left(3\log\left(\frac{2SAT}{\delta}\right)\right)^{\eeps}\left(SA\right)^{\oeps},
    \end{aligned}
\end{equation}
where the first inequality is due to Jensen's inequality. Finally, it follows from \citet{jaksch2010near} that the total probability is bounded by $1-\delta$.
\end{proof}


\section{Proof of Theorem \ref{thm:hqregret}}

Our proof follows that of Theorem 1 in \citet{jin2018q}. We first prove that $Q^k-Q^*$ is bounded for all $s,a,h,k$, then use that fact to recursively decompose the regret. We also use the similar notation: $s^k_h$ and $a^k_h$ denote the state and action taken at step $h$ in episode $k$, and define
\begin{equation}
    \alpha^0_t = \prod_{j=1}^t (1-\alpha_j), \qquad \ait = \alpha_i\prod_{j=i+1}^t (1-\alpha_j)
\end{equation}

We consider the expected regret, and for clarity we omit the expectation over the reward stochasticity. 

We first prove a useful auxiliary lemma about the learning rate, which generalizes Lemma 4.1.a in \citet{jin2018q}:

\begin{lemma}
\label{lem:heavy_conf_sum}
For all $\ait$, we have the following:
\begin{equation}
    t^{-\eeps} \leq \sum_{i=1}^t \ait i^{-\eeps} \leq 2t^{-\eeps}
\end{equation}
\end{lemma}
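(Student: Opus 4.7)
The plan is to prove the lemma by induction on $t$, mirroring the argument for Lemma 4.1(a) of \citet{jin2018q} (which is the special case $\eeps = 1/2$). The essential tool is the recursive identity $\ait = (1-\alpha_t)\alpha^i_{t-1}$ for $i < t$ together with $\alpha^t_t = \alpha_t$, which gives the one-step decomposition
\[
\sum_{i=1}^t \ait\, i^{-\eeps} \;=\; (1-\alpha_t)\sum_{i=1}^{t-1}\alpha^i_{t-1}\, i^{-\eeps} \;+\; \alpha_t\, t^{-\eeps}.
\]
For the base case $t=1$, $\alpha_1 = \tfrac{H+1}{H+1} = 1$, so the sum is just $1 = 1^{-\eeps}$, which lies in $[1^{-\eeps},\, 2 \cdot 1^{-\eeps}]$.

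For the inductive upper bound, I substitute the IH $\sum_{i=1}^{t-1}\alpha^i_{t-1} i^{-\eeps} \leq 2(t-1)^{-\eeps}$ and use $\alpha_t = \tfrac{H+1}{H+t}$, $(1-\alpha_t) = \tfrac{t-1}{H+t}$, clearing the common denominator. The desired inequality $\sum_{i=1}^t \ait i^{-\eeps} \leq 2 t^{-\eeps}$ then reduces to
\[
2(t-1)^{1-\eeps} + (H+1)\, t^{-\eeps} \;\leq\; 2(H+t)\, t^{-\eeps},
\]
which rearranges to $2(t-1)^{1-\eeps} \leq (H + 2t - 1)\, t^{-\eeps}$. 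Since $\epsilon \in (0,1]$ forces $\eeps \in (0, 1/2]$, the exponent $1-\eeps$ is nonnegative, so $(t-1)^{1-\eeps} \leq t^{1-\eeps}$ and the LHS is at most $2t \cdot t^{-\eeps}$; the remaining inequality $2t \leq H + 2t - 1$ is just $H \geq 1$.

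For the inductive lower bound, plug in the IH $\sum_{i=1}^{t-1}\alpha^i_{t-1} i^{-\eeps} \geq (t-1)^{-\eeps}$ and note $(t-1)^{-\eeps} \geq t^{-\eeps}$, so
\[
\sum_{i=1}^t \ait\, i^{-\eeps} \;\geq\; (1-\alpha_t)\, t^{-\eeps} + \alpha_t\, t^{-\eeps} \;=\; t^{-\eeps},
\]
which closes the induction. I do not anticipate any real obstacle: the argument is a routine extension of the $\eeps = 1/2$ calculation, and the only sanity check is ensuring the monotonicity step $(t-1)^{1-\eeps} \leq t^{1-\eeps}$, which is valid precisely because $\eeps \leq 1/2 < 1$.
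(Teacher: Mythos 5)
Your proof is correct and follows essentially the same route as the paper's: induction on $t$ via the recursion $\sum_{i=1}^t \ait i^{-\eeps} = \alpha_t t^{-\eeps} + (1-\alpha_t)\sum_{i=1}^{t-1}\alpha^i_{t-1} i^{-\eeps}$, with the lower bound using $(t-1)^{-\eeps}\geq t^{-\eeps}$ and the upper bound reducing (after clearing the denominator $H+t$) to $(t-1)^{1-\eeps}\leq t^{1-\eeps}$ and $H\geq 1$, exactly as in the paper.
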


\begin{proof}
 We prove via induction on $t$. For the base case, $t=1$, and $\alpha^1_1 = 1$. Now assume the hypothesis holds for $t=1,\ldots, k-1$. For the lower bound, we have:
\begin{equation*}
    \begin{split}
        \sum_{i=1}^k \alpha^i_k i^{-\eeps} &= \alpha_k k^{-\eeps} + (1-\alpha_k)\sum_{i=1}^{k-1} \alpha^i_{k-1}i^{-\eeps}\\
        &\geq \alpha_k k^{-\eeps} + (1-\alpha_k)(k-1)^{-\eeps}\\
        &\geq \alpha_k k^{-\eeps} + (1-\alpha_k)k^{-\eeps}\\
        &= k^{-\eeps}
    \end{split}
\end{equation*}
and for the upper bound:
\begin{equation*}
    \begin{split}
        \sum_{i=1}^k \alpha^i_k i^{-\eeps} &= \alpha_k k^{-\eeps} + (1-\alpha_k)\sum_{i=1}^{k-1} \alpha^i_{k-1}i^{-\eeps}\\
        &\leq \alpha_k k^{-\eeps} + 2(1-\alpha_k)(k-1)^{-\eeps}\\
        &= \frac{H+1}{H+k}k^{-\eeps} + \frac{2k^{\oeps}}{H+k}\\
        &= 2k^{-\eeps} - \frac{H - 1}{H + k}k^{-\eeps}\\
        &\leq 2k^{-\eeps}
    \end{split}
\end{equation*}
which completes the induction.
\end{proof}

The following lemma, corresponding to lemmas 4.3 and C.4 in \citet{jin2018q}, upper and lower bounds the gap between the estimated $Q$-value function at each episode and and $Q^*$.

\begin{lemma}
\label{lem:hqgap}
Let $\beta'_t = 2\sum_{i=1}^t \ait b'_i \leq 16c\rmax\sqrt{H^3\iota / t} + 8H u^{\oeps}(\iota/t)^{\eeps}$ (by lemma \ref{lem:heavy_conf_sum}). Then for all $(s, a, h, k)$, $(Q^k_h - Q^*_h)(s,a)$ satisfies the following bound with probability at least $1-\delta$:
\begin{equation}
\label{eq:lemhqgap}
    0 \leq (Q^k_h - Q^*_h)(s,a) \leq \alpha^0_t H \rmax  + \sum_{i=1}^T \ait (V^{k_i}_{h+1}-V^*_{h+1})(s^{k_i}_{h+1}) + \beta'_t
\end{equation}
\end{lemma}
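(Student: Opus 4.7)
The plan is to adapt the analysis of Lemma~4.3 in \citet{jin2018q} to the truncated-reward update in \algoq, tracking the extra bias and extra bonus introduced by the truncation step. First, I unroll \eqref{eq:hq_update} $t\defeq N^k_h(s,a)$ times to write
\begin{equation*}
Q^k_h(s,a) = \alpha^0_t H\rmax + \sum_{i=1}^t \ait\bigl[r_{h,k_i}(s,a)\indl + V^{k_i}_{h+1}(s^{k_i}_{h+1}) + b'_i\bigr],
\end{equation*}
where $k_i$ indexes the episode containing the $i$-th visit to $(s,a,h)$. Using $Q^*_h = \bar r + P_h V^*_{h+1}$ together with $\alpha^0_t + \sum_i \ait = 1$, subtracting $Q^*_h(s,a)$ decomposes $(Q^k_h - Q^*_h)(s,a)$ into five pieces: (i) the initialization remainder $\alpha^0_t(H\rmax - Q^*_h)$, (ii) the value gap $\sum_i \ait (V^{k_i}_{h+1}-V^*_{h+1})(s^{k_i}_{h+1})$, (iii) the cumulative bonus $\sum_i \ait b'_i$, (iv) a transition martingale $\mathcal{E}_P \defeq \sum_i \ait [V^*_{h+1}(s^{k_i}_{h+1}) - (P_h V^*_{h+1})(s,a)]$, and (v) a truncated-reward error $\mathcal{E}_R \defeq \sum_i \ait [r_{h,k_i}(s,a)\indl - \bar r(s,a)]$.

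Next, I would control the two stochastic errors separately. For $\mathcal{E}_P$, the summands form a bounded martingale difference sequence in $[-H\rmax, H\rmax]$; Azuma--Hoeffding together with the standard identity $\sum (\ait)^2 \le 2H/t$ from \citet{jin2018q} gives $|\mathcal{E}_P| = O(\rmax\sqrt{H^3\iota/t})$. For $\mathcal{E}_R$, I split into a bias term $\sum_i \ait(\mathbb{E}[r_h\indl] - \bar r)$ and a zero-mean martingale. The moment bound $\mathbb{E}[|r_h|^{1+\epsilon}] \le u$ together with the threshold $B_i = (ui/\iota)^{\oeps}$ gives $|\mathbb{E}[r_h\indl] - \bar r| \le u/B_i^{\epsilon} = u^{\oeps}(\iota/i)^{\eeps}$, which summed against $\ait$ via Lemma \ref{lem:heavy_conf_sum} is $O(u^{\oeps}(\iota/t)^{\eeps})$. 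The zero-mean part is bounded by Azuma--Hoeffding with weighted increments $|\ait r_h\indl| \le \ait B_i$; a careful peeling over the time-varying threshold again yields $O(u^{\oeps}(\iota/t)^{\eeps})$.

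The additive bonus $c_2 H u^{\oeps}(\iota/i)^{\eeps}$ in $b'_i$ is calibrated so that $\sum_i \ait b'_i$ dominates $|\mathcal{E}_P| + |\mathcal{E}_R|$ with high probability; combining \citet{jin2018q}'s bound $\sum_i \ait / \sqrt{i} \le 2/\sqrt{t}$ with Lemma \ref{lem:heavy_conf_sum} also verifies the stated inequality $\beta'_t = 2\sum \ait b'_i \le 16c\rmax\sqrt{H^3\iota/t} + 8H u^{\oeps}(\iota/t)^{\eeps}$. The upper bound in \eqref{eq:lemhqgap} then follows directly from $\mathcal{E}_P + \mathcal{E}_R + \sum \ait b'_i \le \beta'_t$ and $\alpha^0_t(H\rmax - Q^*_h) \le \alpha^0_t H\rmax$ (since $Q^*_h \ge 0$). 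The lower bound $Q^k_h \ge Q^*_h$ is proved by reverse induction on $h$ with base case $V^k_{H+1} = V^*_{H+1} = 0$: the inductive hypothesis makes $\sum \ait (V^{k_i}_{h+1}-V^*_{h+1})(s^{k_i}_{h+1})$ nonnegative, while the cumulative bonus absorbs $-(\mathcal{E}_P + \mathcal{E}_R)$. A union bound over $(s,a,h,k)$ and visit counts pushes the total failure probability into $\delta$.

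The main obstacle is the concentration bound for $\mathcal{E}_R$: the truncated mean estimator of \eqref{eq:trunc} is stated for a uniformly-weighted sample mean with a fixed sample size, whereas here the weights $\ait$ are non-uniform and the truncation threshold $B_i$ varies with $i$. The technical work is to combine the standard bias/variance decomposition of the truncated mean with the weighted-sum identity from Lemma \ref{lem:heavy_conf_sum} to recover the $u^{\oeps}(\iota/t)^{\eeps}$ rate that matches the second summand of $\beta'_t$.
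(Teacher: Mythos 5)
Your proposal is correct and follows essentially the same decomposition as the paper: unroll the update, use $\alpha^0_t + \sum_i \ait = 1$ together with the Bellman identity for $Q^*_h$ to split $(Q^k_h-Q^*_h)(s,a)$ into the initialization remainder, the value gap, the bonus, a transition martingale, and a truncated-reward error; bound the transition martingale by Azuma--Hoeffding with range $H\rmax$ to get $O(\rmax\sqrt{H^3\iota/t})$; and let the augmented bonus absorb both stochastic errors, with the lower bound following by the usual backward induction. The one place you diverge is the truncated-reward term $\mathcal{E}_R = \sum_i \ait\,(r_{h}(s,a)\indl - \bar r_h(s,a))$: the paper first uniformizes the weights via $\ait \le 2H/t$, reducing to $\frac{2H}{t}\sum_i (r_h\indl - \bar r_h)$, and then invokes Lemma~1 of \citet{bubeck2013bandits} (the truncated-mean concentration, which already accommodates the index-dependent thresholds $B_i$) as a black box to get $\pm 8Hu^{\oeps}(\iota/t)^{\eeps}$. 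You instead keep the non-uniform weights and redo the bias/martingale decomposition from scratch, controlling the bias by $u/B_i^\epsilon = u^{\oeps}(\iota/i)^{\eeps}$ and summing against $\ait$ via Lemma~\ref{lem:heavy_conf_sum}. Your route is longer but arguably cleaner on one point: reducing a weighted sum of \emph{signed} terms to $2H/t$ times the unweighted sum requires two-sided control that the paper passes over silently, whereas your direct treatment avoids that reduction entirely. The cost is that you must re-establish the martingale half of the truncated-mean concentration for weighted, variable-threshold increments, which you correctly flag as the remaining technical work; the paper's citation discharges exactly that step.
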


\begin{proof}
Since the reward is stochastic, the following identity holds for $Q^*$:
\begin{equation}
    Q^*_h(s,a) = \alpha^0_t Q^*_h(s, a) + \sum_{i=1}^t \ait\left[\bar{r}_h(s,a)+(\mathbb{P}_h-\hat{\mathbb{P}}^{k_i}_h)V^*_{h+1}(s,a) + V^*_{h+1}(s^{k_i}_{h+1})\right]
\end{equation}
Hence, we have
\begin{equation}
\label{eq:hqgap}
\begin{split}
    (Q^k_h - Q^*_h)(x,a) \leq \alpha^0_t (H\rmax - Q^*_h(s, a)) &+ \sum_{i=1}^t \ait \left[(V^{k_i}_{h+1}-V^*_{h+1})(s^{k_i}_{h+1}) + [(\hat{\mathbb{P}}^{k_i}_h - \mathbb{P}_h)V^*_{h+1}](s,a) + b_i\right]\\
    &+\sum_{i=1}^t \ait \left(r_h(s,a)\indl - \bar{r}_h(s,a)\right)
\end{split}
\end{equation}

We first consider the last term. Since $\ait \leq \frac{2H}{t}$, this quantity is bounded by $\frac{2H}{t} \sum_{i=1}^t (r_h \indl - \bar{r}_h
)(x,a)$. By Lemma 1 from \citet{bubeck2013bandits}, this is upper (and lower) bounded by $\pm 8Hu^{\oeps}\left(\frac{\iota}{t}\right)^{\eeps}$ with probability at least $1-\frac{\delta}{2SAT}$. A union bound over all $s, a, t$ implies that this uniformly holds with probability at least $1-\frac{\delta}{2}$.

It remains to modify the computation of the second term. Here, the setup is almost identical: we let $k_i\defeq \min\left(\{k\in [K] \,|\, k > k_{i-1} \land (s^k_h, a^k_h) = (s,a)\} \bigcup \{K+1\}\right)$. Then $(\mathds{1}[k_i\leq K]\cdot [(\hat{\mathbb{P}}^{k_i}_h - \mathbb{P}_h)V^*_{h+1}](s,a))^{\tau}_{i=1}$ is a martingale difference sequence with respect to the filtration consisting of the $\sigma$-field generated by all random variables up to step $h$ in episode $k_i$. The main difference in this setting is that $V^*$ is now uniformly bounded by $H\rmax$. Hence, a straightforward application of Azuma-Hoeffding followed by a union bound over $k_i\leq K$ results in the following bound, which holds with probability $1-\delta/{SAH}$:
\begin{equation}
\label{eq:mdsbound}
    \sum_{i=1}^\tau \alpha^i_{\tau} \cdot \mathds{1}[k_i\leq K]\cdot [(\hat{\mathbb{P}}^{k_i}_h - \mathbb{P}_h)V^*_{h+1}](s,a) \leq cHr_{\max}\sqrt{\sum_{i=1}^\tau (2\alpha^i_{\tau})^2\cdot\iota} \leq cr_{\max}\sqrt{\frac{H^3\iota}{\tau}}
\end{equation}
\end{proof}

We are now ready to prove theorem \ref{thm:hqregret}.

\begin{proof}
The proof follows identically as that of Theorem 2 in \citet{jin2018q} except for the following two differences: 1. $\beta'_t$ has an additional additive term, and 2. the martingale difference sequence involving $V^*-V^k$ is now bounded by $H\rmax$ instead of merely $H$. We compute the additional regret incurred for each term:

\begin{itemize}
    \item \textbf{Increase in $\sum_{h=1}^H\sum_{k=1}^K \beta_{n^k_h}$.} The additional regret incurred by $\beta'$ relative to $\beta$ is given by $$H^2u^{\oeps}\iota^{\eeps}\sum_{s,a}\sum_{n=1}^{N^K_h(s,a)} n^{-\eeps} \leq \Theta\left(H^2(SA\iota)^{\eeps}K^{\oeps}\right) \leq \Theta\left(H(SA\iota)^{\eeps}T^{\oeps}\right)$$ because the sum is maximized when $N^K_h(s,a) = \frac{K}{SA}$ for all $s,a$.
    \item \textbf{Regret scaling due to $V^*-V^k$.} By Azuma-Hoeffding, the regret is scaled by a $\rmax$ factor, i.e. $cH\rmax\sqrt{T\iota}$. This term is strictly dominated by the previous one.
\end{itemize}

Hence, the total regret is the same as that in \citet{jin2018q} scaled by $\rmax$ with an additional $\Theta\left(H^2(SA\iota)^{\eeps}T^{\oeps}\right)$ factor.
\end{proof}

\subsection{Proof of Theorem \ref{thm:hqbregret}}

\addtocounter{theorem}{-3}
\begin{theorem}
In the finite-horizon episodic MDP setting, the regret of \algoq with UCB-Bernstein is $$\Theta\left(\sqrt{H^3\rmax^3 SAT\iota} + \sqrt{H^{\frac{1+4\epsilon}{\epsilon}}\rmax S^2A^2\iota^2} + \sqrt{H^9\rmax^2u^\oeps S^3A^3\iota^3} + H^2(SA\iota)^{\eeps}T^{\oeps} + H^{\frac{1+3\epsilon}{\epsilon}}\sqrt{S^3A^3\iota^4\epsilon}\right)$$.
\end{theorem}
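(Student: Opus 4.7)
}
My plan is to follow the same overall template used to prove Theorem \ref{thm:hqregret}, but replace the Hoeffding-based bound on the martingale $[(\hat{\mathbb{P}}^{k_i}_h-\mathbb{P}_h)V^*_{h+1}](s,a)$ (eq.~\ref{eq:mdsbound}) by the sharper Freedman-type bound used in Jin et al.\ for UCB-Bernstein, while keeping the reward-truncation piece identical to the Hoeffding case. Concretely, I would first re-derive the analog of Lemma \ref{lem:hqgap}: the reward-noise term in eq.~\ref{eq:hqgap} is unchanged and still contributes $\pm 8Hu^{\oeps}(\iota/t)^{\eeps}$ with probability $1-\delta/(2SAT)$ (this is exactly the truncated-mean bound from \citet{bubeck2013bandits} that we already used for UCB-Hoeffding). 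The transition-noise term is now controlled using Freedman together with the empirical variance estimate $\hat{\mathbb V}_hV_{h+1}$ of the next-state value, giving a bonus of order $\sqrt{\hat{\mathbb V}_hV_{h+1}\cdot \iota/t}+H\rmax \iota/t$ scaled by $\rmax$ because $V^*_{h+1}\in[0,H\rmax]$. This yields the Bernstein-augmented analog of eq.~\ref{eq:lemhqgap}, with the augmented bonus $\beta_t''=\beta_t^{\mathrm{Bern}}+8Hu^{\oeps}(\iota/t)^{\eeps}$ summed against $\ait$ via Lemma \ref{lem:heavy_conf_sum}.

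Next, I would recursively unroll the gap $(V^k_h-V^{\pi_k}_h)(s_h^k)$ along the trajectory exactly as in the proof of Theorem 2 of \citet{jin2018q}, producing a sum of per-visit bonuses plus a martingale difference term driven by $V^*-V^{\pi_k}$. All $V$-quantities are now in $[0,H\rmax]$, so the Azuma bound on this martingale contributes $\Theta(\sqrt{H^3\rmax^2 T\iota})$ rather than $\Theta(\sqrt{H^3 T\iota})$. Summing $\beta_t^{\mathrm{Bern}}$ along visits and applying the law-of-total-variance pigeonhole argument of Jin et al.\ (with all value functions rescaled by $\rmax$) produces the leading Bernstein term $\sqrt{H^3\rmax^3 SAT\iota}$, together with the lower-order Bernstein remainders $\sqrt{H^9\rmax^2 u^\oeps S^3A^3\iota^3}$ (from the variance-of-variance estimate, where the $u^\oeps$ factor enters through the reward-variance proxy inherited from the truncated estimator) and $\sqrt{H^{(1+4\epsilon)/\epsilon}\rmax S^2A^2\iota^2}$ (from the Freedman second-order term, after plugging the heavy-tailed per-visit bound into the pigeonhole sum).

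The remaining two terms come from the reward part of $\beta_t''$. Summing the additive $8Hu^{\oeps}(\iota/t)^{\eeps}$ over all visits, using the same concavity/pigeonhole argument as in the UCB-Hoeffding proof, gives
\begin{equation*}
H^2u^{\oeps}\iota^{\eeps}\sum_{s,a}\sum_{n=1}^{N_h^K(s,a)}n^{-\eeps}\;\leq\;\Theta\bigl(H^2(SA\iota)^{\eeps}T^{\oeps}\bigr),
\end{equation*}
which is the fourth term. The last term $H^{(1+3\epsilon)/\epsilon}\sqrt{S^3A^3\iota^4\epsilon}$ arises when this heavy-tailed per-visit contribution interacts with the second-order Freedman remainder in the Bernstein recursion: one applies Cauchy--Schwarz to the cross term between the $H\rmax\iota/t$ residual of Bernstein and the $Hu^{\oeps}(\iota/t)^{\eeps}$ reward residual, then sums using Jensen and $N_h^K(s,a)\leq K/(SA)$ in the worst case.

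The main obstacle I expect is bookkeeping the Bernstein variance recursion in the presence of truncation: the empirical variance $\hat{\mathbb V}_hV_{h+1}$ used in the Bernstein bonus must itself be bounded via the law of total variance while $V$ is scaled by $\rmax$, and the reward-noise contribution must be folded in without double-counting. Verifying that the truncation bias $|\mathbb E[r\mathds{1}_{|r|\leq B_t}]-\bar r|$ is absorbed by the additive $Hu^{\oeps}(\iota/t)^{\eeps}$ term (so that the Freedman analysis still applies to the transition-noise martingale alone) is the crucial technical check; the rest is careful accounting that slots into the Jin et al.\ template.
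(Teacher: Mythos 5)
Your proposal follows essentially the same route as the paper's proof: augment the Bernstein bonus with the additive $8Hu^{\oeps}(\iota/t)^{\eeps}$ truncation term, rescale all value functions (and hence the Azuma/Freedman and law-of-total-variance steps) by $\rmax$, track the extra heavy-tailed contributions to the gap between the empirical variance $W_t$ and $\mathbb{V}_hV_{h+1}$, and sum the bonuses via the pigeonhole argument to recover each term of the stated bound. Your attribution of the $H^{\frac{1+3\epsilon}{\epsilon}}\sqrt{S^3A^3\iota^4\epsilon}$ term to the interaction between the heavy-tailed residual and the second-order Freedman remainder matches the paper's mechanism (there it is folded into an extra $H^{\frac{1+2\epsilon}{\epsilon}}\iota\sqrt{SA\epsilon}/t$ bonus term via weighted AM--GM and then summed over visits), so no substantive gap remains.
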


\begin{proof}[Proof of Theorem \ref{thm:hqbregret}]
We first re-define notation from \citet{jin2018q}. The variance operator for the next-state value is given by
\begin{equation}
    [\nsv](s,a) \defeq \mathbb{E}_{s'\sim\mathbb{P}_h(\cdot|s,a)} \left[V_{h+1}(s') - [\mathbb{P}_hV_{h+1}](s,a) \right]^2
\end{equation}
and the empirical variance estimate
\begin{equation}
    W_t(s,a,h) \defeq \frac{1}{t} \sum_{i=1}^t \left[V^{k_i}_{h+1}(s^{k_i}_{h+1}) - \frac{1}{t}\sum_{j=1}^t V^{k_j}_{h+1}(s^{k_j}_{h+1}) \right]^2
\end{equation}

To define the bonus, \citet{jin2018q} define
\begin{equation}
    \beta_t \defeq \min\left\{c_1\left(\sqrt{\frac{H}{t}\cdot(W_t(s,a,h) + H)\iota} + \frac{\sqrt{H^7SA}\cdot\iota}{t}\right), c_2\sqrt{\frac{H^3\iota}{t}}\right\}
\end{equation}
and let 
\begin{equation}
    b_1(s,a,h) \defeq \frac{\beta_1(s,a,h)}{2}, \qquad b_t(s,a,h) \defeq \frac{\beta_t(s,a,h)-(1-\alpha_t)\beta_{t-1}(s,a,h)}{2\alpha_t}
\end{equation}

We claim that the proof of Theorem 2 in \citet{jin2018q} holds in the stochastic heavy-tailed reward setting when using the following bonus:
\begin{equation}
\label{eq:hq_bonus}
\begin{split}
    \beta'_t \defeq &\min\left\{c_1\left(\sqrt{\frac{H\rmax}{t}\cdot(W_t(s,a,h) + H)\iota} + \frac{\sqrt{H^7\rmax SA}\cdot\iota}{t} + \frac{H^2\iota\sqrt{\rmax^3}}{t} + \frac{H^{\frac{1+2\epsilon}{\epsilon}}\iota\sqrt{SA\epsilon}}{t}\right), c_2\rmax\sqrt{\frac{H^3\iota}{t}}\right\}\\
    &+ 8H u^{\oeps}(\iota/t)^{\eeps}
\end{split}
\end{equation}

We proceed to calculate the regret following the steps of \citet{jin2018q}. Again, we use the same notation $\phi^k_h = (V^k_h-V^*_h)(s^k_h)$ and $\delta^k_h = (V^k_h - V^{\pi_k}_h)(s^k_h)$.

\paragraph{Coarse upper bound on $Q^k-Q^*$.} Note that that the corresponding bound for this term under UCB-Hoeffding (eq. \ref{eq:lemhqgap}) holds here identically. This fact will be used in the following to bound the gap between $W_t$ and $\nsv$. 

\paragraph{Bounding the gap between $W_t$ and $\nsv$} We first translate lemma C.7 \citet{jin2018q} to our setting, showing that for some non-negative weight vector $w=(w_1, \ldots, w_k)$, the sum $\sum_{k=1}^K \phi^k_h$ is bounded by $\sum_{k=1}^K (Q^k_h-Q^*_h)(s^k_h, a^k_h)$. In the bounded $[0,1]$ reward setting, this bound can be computed as $O(SA\|w\|_\infty\sqrt{H^5\iota}+\sqrt{SA\|w\|_1\|w\|_\infty H^5\iota})$. In our setting, these terms are scaled by $\rmax$, plus an additional term due to our modified $\beta_t'$. The increase due to this term is given by
\begin{equation}
\begin{split}
    O(H) \cdot \sum_{k=1}^K w_k Hu^\oeps\left(\frac{\iota}{n^k_h}\right)^\eeps &= O(H)\cdot \sum_{s,a}\sum_{i=1}^{N^K_h(s,a)}w_{k_i(s,a)} Hu^\oeps\left(\frac{\iota}{n^k_h}\right)^\eeps\\
    &\leq \Theta\left(H^2u^\oeps\iota^\eeps\right) \cdot \sum_{s,a} \|w\|_\infty \left(1 + \sum_{i=1}^{\left\lfloor \frac{\|w\|_1}{SA\|w\|_\infty}\right\rfloor}\left(\frac{1}{i}\right)^\eeps \right)\\
    &\leq \Theta\left(H^2u^\oeps\iota^\eeps \left(SA\|w\|_\infty + \left(SA\|w\|_\infty\right)^{\eeps}\|w\|_1^{\oeps}\right)\right)
\end{split}
\end{equation}

We can now apply this lemma to compute the increase in the bound between the empirical and actual variance. Recall that \citet{jin2018q} bound the gap between $P_1 \defeq [\nsv](s,a)$ and $P_4 \defeq W_t(s,a,h)$ via the triangle inequality on intermediate terms
\begin{equation}
    \begin{split}
        P_2 &\defeq \frac{1}{t}\sum_{i=1}^t \left[V^*_{h+1}(s^{k_i}_{h+1}) - [\pvo](s,a)\right]^2\\
        P_3 &\defeq \frac{1}{t}\sum_{i=1}^t \left[V^*_{h+1}(s^{k_i}_{h+1}) - \frac{1}{t}\sum_{j=1}^t V^*_{h+1}(s^{k_j}_{h+1}) \right]^2
    \end{split}
\end{equation}

We proceed to analyze the gaps $|P_1-P_2|$, $|P_2-P_3|$, $|P_3-P_4|$ in this setting. For $|P_1-P_2|$ and $|P_2-P_3|$, straightforward applications of the Azuma-Hoeffding inequality show that they are bounded by $cH^2\rmax^2\sqrt{\iota/t}$. For $|P_3 - P_4|$, we apply our preceding lemma: taking $w$ so that $w_{k_i} = \frac{1}{t}$ for $i=1,\ldots, t$ and $0$ otherwise, so that $\|w\|_1=1$ and $\|w\|_\infty=\frac{1}{t}$, yields
\begin{equation}
    |P_3 - P_4|\leq O\left(\rmax\sqrt{H^7\iota}\left(\frac{SA}{t}+\sqrt{\frac{SA}{t}}\right) + \frac{H^3u^\oeps\iota^\eeps SA}{t} + \frac{H^3u^\oeps (SA\iota)^\eeps}{t^\eeps} \right)
\end{equation}

Hence, the gap between the empirical variance and the actual variance for \algoq with UCB-Bernstein is upper bounded by
\begin{equation}
    \label{eq:vargap}
    \Theta\left(H^2\rmax^2\sqrt{\frac{\iota}{t}} + \rmax\sqrt{H^7\iota}\left(\frac{SA}{t}+\sqrt{\frac{SA}{t}}\right) + \frac{H^3u^\oeps\iota^\eeps SA}{t} + \frac{H^3u^\oeps (SA\iota)^\eeps}{t^\eeps}\right)
\end{equation}

\paragraph{Applying Freedman's inequality.} Recall that for UCB-Bernstein, an application of Freedman's inequality allows us to bound the martingale difference sequence in eq. \ref{eq:mdsbound} by $O\left(\sqrt{\frac{H}{t}[\mathbb{V}_hV^*_{h+1}]\iota + \frac{H^2}{t}\rmax\iota}\right)$ \citep{freedman1975tail}. \citet{jin2018q} show that in the bounded $[0,1]$ reward setting, this term is bounded by 
\begin{equation}
    \Theta\left(\sqrt{\frac{H}{t}(W_t(s,a,h)+H)\iota}+\frac{\iota\sqrt{H^7SA}}{t}\right)
\end{equation}

In the following, we modify the computation of this bound using eq. \ref{eq:vargap}:
\begin{equation}
\label{eq:new_vargap}
\begin{split}
    &\Theta\left(\sqrt{\frac{H\iota}{t}\left(W_t(s,a,h)\rmax + H\rmax + H^2\rmax^2 \sqrt{\frac{\iota}{t}} +  H^3u^\oeps\iota^\eeps\left(\frac{SA}{t} + \left(\frac{SA\iota}{t}\right)^\eeps\right)\right)} + \frac{\iota\sqrt{H^7SA\rmax}}{t}\right)\\
    \leq \,&\Theta\left(\sqrt{\frac{H\iota}{t}\left(W_t(s,a,h)\rmax + H\rmax + \frac{1}{1+\epsilon}\left(H+ \epsilon\frac{H^{\frac{2+3\epsilon}{\epsilon}}SA\iota}{t}\right)\right)} + \frac{H^2\iota\sqrt{\rmax^3}}{t} + \frac{\iota\sqrt{H^7SA\rmax}}{t}\right)\\
    \leq \,&\Theta\left(\sqrt{\frac{H\rmax\iota}{t}\left(W_t(s,a,h) + H\right)} + \frac{H^{\frac{1+2\epsilon}{\epsilon}}\iota\sqrt{SA\epsilon}}{t} + \frac{H^2\iota\sqrt{\rmax^3}}{t} + \frac{\iota\sqrt{H^7SA\rmax}}{t}\right) \leq \beta_t
\end{split}
\end{equation}
where we repeatedly leverage the (weighted) AM-GM inequality.

\paragraph{Analogue of Lemma C.5.} We now show that the analysis of the total variance introduces an unavoidable $\rmax^2$ factor. The analysis follows identically as that of Lemma C.5 in \citet{jin2018q}, except noting that $V$ is now bounded by $H\rmax$, which implies that the total variance is bounded by $H^2\rmax^2$. Namely, we have the following:
\begin{equation}
\label{eq:varpibound}
    \sum_{k=1}^K \sum_{h=1}^H \mathbb{V}_hV^{\pi_k}_{h+1} \leq O\left(\rmax^2\left(HT+H^3\iota\right)\right)
\end{equation}

\paragraph{Analogue of Lemma C.6.} We also state the following analogue of Lemma C.6 in \citet{jin2018q}, upper bounding $W_t$ via eq. \ref{eq:varpibound}. The proof of this lemma is a straightforward modification of that for the bounded reward setting.
\begin{lemma}
\label{lem:wbound}

There exists constant $c$ such that the following holds with probability $1-4\delta$
\begin{equation*}
\begin{split}
W_t(s,a,h) \leq \,&\mathbb{V}_hV^{\pi^k_h}_{h+1}(s,a) + 2H\rmax(\delta^k_{h+1}+\xi^k_{h+1})\\
&+ c\left(H^2\rmax^2\sqrt{\frac{\iota}{t}} + \rmax\sqrt{H^7\iota}\left(\frac{SA}{t}+\sqrt{\frac{SA}{t}}\right) + H^3u^\oeps\left(\frac{\iota^\eeps SA}{t} + \frac{(SA\iota)^\eeps}{t^\eeps}\right)\right)
\end{split}
\end{equation*}
\end{lemma}

\paragraph{Computing total regret.}
Here, we show the necessary modifications to the computation of regret akin to those in Theorem 2 \citet{jin2018q}. We first re-state the recursive expansion of the regret:

\begin{equation}
\label{eq:reg_bound}
    \sum_{k=1}^K \delta^k_h \leq SAH^2 + \sum_{h'=h}^H \sum_{k=1}^K (\beta_{n^k_{h'}}(s^k_{h'}a^k_{h'}, h') + \xi^k_{h+1})
\end{equation}
where $\xi^k_{h+1} \defeq [(\hat{\mathbb{P}}^{k_i}_h - \mathbb{P}_h)(V^*_{h+1}-V^k_{h+1})](s,a)$. By Azuma-Hoeffding, we have with probability $1-\delta$
\begin{equation}
\label{eq:xi_bound}
    \sum_{h'=h}^H\sum_{k=1}^K \xi^k_{h'} \leq \Theta(H\rmax\sqrt{T\iota})
\end{equation}
 
We also recall eq. C.13 from \citet{jin2018q}, scaled by a $\rmax$ factor in our setting:
\begin{equation}
\label{eq:loose_delta_bound}
\sum_{k=1}^K \delta^k_h \leq \Theta(\rmax\sqrt{H^4SAT\iota})
\end{equation}

We now proceed to bound $\sum_{k=1}^K\sum_{h=1}^H \beta_{n^k_h}$:
\begin{equation}
\label{eq:beta_bound}
\begin{split}
    \sum_{k=1}^K\sum_{h=1}^H \beta_{n^k_h} \leq &\sum_{k=1}^K\sum_{h=1}^H \Theta\left(\sqrt{\frac{H\rmax}{t}(W_t(s,a,h) + H)\iota} + \frac{\iota\sqrt{H^7\rmax SA}}{t} + \frac{H^2\iota\sqrt{\rmax^3}}{t} + \frac{H^{\frac{1+2\epsilon}{\epsilon}}\iota\sqrt{SA\epsilon}}{t}\right) + \\
    &\sum_{k=1}^K\sum_{h=1}^H H u^{\oeps}(\iota/t)^{\eeps}
\end{split}
\end{equation}

The second sum is bounded similarly as in the analysis of UCB-Hoeffding:
\begin{equation}
    \sum_{k=1}^K\sum_{h=1}^H H u^{\oeps}(\iota/t)^{\eeps} \leq \Theta\left(H^2(SA\iota)^{\eeps}T^{\oeps}\right)
\end{equation}

We now bound the second, third, and fourth terms in the first sum using the fact that $\sum_{k=1}^K\sum_{h=1}^H \frac{1}{n^k_h} \leq HSA\iota$:
\begin{equation}
\begin{split}
    \sum_{k=1}^K\sum_{h=1}^H \frac{\iota\sqrt{H^7\rmax SA}}{t} &\leq \sqrt{H^9\rmax S^3A^3\iota^4}\\
    \sum_{k=1}^K\sum_{h=1}^H\frac{H^2\iota\sqrt{\rmax^3}}{t} &\leq H^3SA\iota^2\sqrt{\rmax^3}\\
    \sum_{k=1}^K\sum_{h=1}^H \frac{H^{\frac{1+2\epsilon}{\epsilon}}\iota\sqrt{SA\epsilon}}{t} &\leq H^{\frac{1+3\epsilon}{\epsilon}}\sqrt{S^3A^3\iota^4\epsilon}\\
\end{split}
\end{equation}

We now bound $\sum_{k=1}^K\sum_{h=1}^H \sqrt{\frac{H\rmax}{t}(W_t(s,a,h) + H)\iota}$. From eq. C.15 in \citet{jin2018q}, this is upper bounded by $\sqrt{\sum_{k=1}^K\sum_{h=1}^H W_t(s,a,h)} \cdot \sqrt{H^2\rmax SA\iota} + \sqrt{H^3\rmax SAT\iota}$.

We now have
\begin{equation}
\begin{split}
    \skh W_{n^k_h}(s,a,h) \leq &\skh \left(\mathbb{V}_hV^{\pi^k_h}_{h+1}(s^k_h, a^k_h)+2H\rmax(\delta^k_{h+1}+\xi^k_{h+1}\right) +\\
    &\skh \Theta\left(H^2\rmax^2\sqrt{\frac{\iota}{t}} + \rmax\sqrt{H^7\iota}\left(\frac{SA}{t}+\sqrt{\frac{SA}{t}}\right) + H^3u^\oeps\left(\frac{\iota^\eeps SA}{t} + \frac{(SA\iota)^\eeps}{t^\eeps}\right)\right)
\end{split}
\end{equation}

Substituting equations \ref{eq:varpibound}, \ref{eq:xi_bound}, and \ref{eq:loose_delta_bound}, this is bounded by
\begin{equation}
\begin{split}
    \Theta\Bigl(&\rmax^2\sqrt{H^8SAT\iota} + H\rmax^2 T + H^3\rmax^2\iota + \rmax S^2A^2\sqrt{H^9\iota^3} +\\
    &\rmax SA\sqrt{H^8T\iota} + H^3u^\oeps\iota^{\frac{1+2\epsilon}{1+\epsilon}} S^2A^2 + H^{\frac{2+3\epsilon}{1+\epsilon}}u^\oeps(S^2A^2\iota)^\eeps T^\oeps\Bigr)
\end{split}
\end{equation}

Noting that $\epsilon^2\leq 1+\epsilon$ and $\rmax \leq u^\oeps$ by Cauchy-Schwarz, we can simplify this bound into the following:
\begin{equation}
   \skh W_{n^k_h}(s,a,h) \leq \Theta \left( H\rmax^2 T + \rmax^2S^2A^2H^7\iota + H^{\frac{1+2\epsilon}{1+\epsilon}}SA\iota + u^\oeps S^2A^2\sqrt{H^9\iota^3}\right)
\end{equation}
    
Hence, we have the following:
\begin{equation}
\begin{split}
    \sqrt{\sum_{k=1}^K\sum_{h=1}^H W_{n^k_h}(s,a,h)} \cdot \sqrt{H^2\rmax SA\iota} &\leq \Theta\left(\sqrt{H^3\rmax^3 SAT\iota} + \sqrt{H^{\frac{1+4\epsilon}{\epsilon}}\rmax S^2A^2\iota^2} + \sqrt{H^9\rmax^2u^\oeps S^3A^3\iota^3}\right)
\end{split}
\end{equation}

Substituting everything back into eqs. \ref{eq:reg_bound} and \ref{eq:beta_bound} yields the desired result.

\end{proof}

\section{Experiment Details}

\subsection{Synthetic MDPs}

Here, we describe the SixArms and DoubleChain MDPs in further detail. SixArms consists of six states $s_1, \ldots, s_6$ rearranged around a central initial state $s_0$ \citep{strehl2008analysis}. There are six actions, and only taking action $i-1$ in state $i$ for $i=1, \ldots, 6$ yields positive reward. Taking action $i$ in state $s_0$ transitions to state $i$ with probability $p_i$, and taking action $i-1$ state $s_i$, $i>0$ transitions to itself with probability $1$, and taking any action transitions back to $s_0$. Letting $r_i$ denote $r(i, i-1)$, we set $r_1$ to $\mathcal{N}(1.20, 0.1^2)$, and $r_i$ to $\mathcal{L}(\mu_i, 1.1, 0, 1)$ where $\mu_i=1+0.2(i-1)$ for $i=1,\ldots, 6$. On SixArms, we run all algorithms for $5 \cdot 10^5$ episodes of length $25$ for $30$ random seeds. 

DoubleChain, pictured in figure \ref{fig:double_chain}, begins at an initial state $s_0$, and can transition to either of two RiverSwim-style MDPs of length $l$ \citep{dimakopoulou2018coordinated}. As in RiverSwim, only the rightmost states have significant positive reward; all other states have either zero or slightly negative mean reward. In our DoubleChain experiments, the reward distributions for the two rightmost states $s_l$ and $s_{2l+1}$ are set to $\mathcal{N}(0.5, 0.1^2)$ and symmetric ($\beta=0$) Levy $\alpha$-stable with $\alpha=1.1$, $\mu=1$ respectively.  The rewards for states $1,\ldots, l-1$ are set to $\mathcal{N}(0, 0.1^2)$, and states $l+1,\ldots, 2l$ to $\mathcal{N}(-0.1, 0.01^2)$. We set $p=0.8$, $l=3$, and run all algorithms for $5\cdot10^{5}$ episodes of length $10$ for $30$ random seeds. 

\subsection{Deep RL experiments}

We used DQN with the following hyperparameters:

\begin{table}[h!]
\centering
\begin{tabular}{|l|l|}
\hline
Hyperparameter                   & Value                                                \\ \hline
$n$-step update horizon          & 1                                                    \\ 
Discount ($\gamma$) & 0.99                                                 \\ 
Adam epsilon                     & 0.0003125                                            \\ 
Minibatch size                   & 64                                                   \\ 
Max replay buffer size           & 500000                                               \\ 
Min replay buffer size           & 500                                                  \\   
Learning rate & 1e-4 linearly annealed to 1e-5                       \\ 
Epsilon greedy schedule          & 1 linearly annealed to 0.01 over first 10\% of steps \\ 
$Q$-network update period        & 2                                                    \\ 
Target $Q$-network update period & 50                                                   \\ \hline
\end{tabular}
\vspace{1ex}
\caption{Shared DQN hyperparameters}
\end{table}

\end{document}